\title{Reconciling Model Multiplicity for Downstream Decision Making}
\author[1]{Ally Yalei Du\footnote{Denote alphabetical order.}}
\author[2]{Daniel Ngo$^*$}
\author[1]{Zhiwei Steven Wu}
\affil[1]{Carnegie Mellon University, \texttt{\{aydu, zstevenwu\}@cmu.edu}}
\affil[2]{University of Minnesota, \texttt{ngo00054@umn.edu}}
\newcommand{\declarecolor}[2]{\definecolor{#1}{RGB}{#2}\expandafter\newcommand\csname #1\endcsname[1]{\textcolor{#1}{##1}}}
\definecolor{plotblue}{HTML}{377eb8}
\definecolor{plotorange}{HTML}{ff7f00}
\definecolor{plotgreen}{HTML}{4daf4a}
\date{}
\begin{document}
\maketitle
\begin{abstract}
    We consider the problem of \emph{model multiplicity} in downstream decision-making, a setting where two predictive models of equivalent accuracy cannot agree on the best-response action for a downstream loss function. We show that even when the two predictive models approximately agree on their individual predictions almost everywhere, it is still possible for their induced best-response actions to differ on a substantial portion of the population. We address this issue by proposing a framework that \emph{calibrates} the predictive models with regard to both the downstream decision-making problem and the individual probability prediction. Specifically, leveraging tools from multi-calibration, we provide an algorithm that, at each time-step, first reconciles the differences in individual probability prediction, then calibrates the updated models such that they are indistinguishable from the true probability distribution to the decision-maker. We extend our results to the setting where one does not have direct access to the true probability distribution and instead relies on a set of i.i.d data to be the empirical distribution. Finally, we provide a set of experiments to empirically evaluate our methods: compared to existing work, our proposed algorithm creates a pair of predictive models with both improved downstream decision-making losses and agrees on their best-response actions almost everywhere.
\end{abstract}
\newpage
\tableofcontents
\newpage
\newtheorem{theorem}{Theorem}[section]
\newtheorem{lemma}[theorem]{Lemma}
\newtheorem{fact}[theorem]{Fact}
\newtheorem{claim}[theorem]{Claim}
\newtheorem{remark}[theorem]{Remark}
\newtheorem{corollary}[theorem]{Corollary}
\newtheorem{proposition}[theorem]{Proposition}
\newtheorem{assumption}[theorem]{Assumption}
\newtheorem{example}[theorem]{Example}
\newtheorem{definition}[theorem]{Definition}
\newtheorem{condition}[theorem]{Condition}

\newtheoremstyle{named}{}{}{\itshape}{}{\bfseries}{.}{.5em}{\thmnote{#3 }#1}
\theoremstyle{named}
\newtheorem*{namedtheorem}{Assumption}

\makeatletter
\def\namedlabel#1#2{\begingroup
   \def\@currentlabel{#2}%
   \label{#1}\endgroup
}
\makeatother

\newcommand{\unif}{\mathsf{Unif}}
\newcommand{\normal}{\mathsf{Normal}}
\newcommand{\snr}{\mathsf{snr}}
\newcommand{\PCR}{\texttt{PCR}}
\newcommand{\cA}{\mathcal{A}}
\newcommand{\cE}{\mathcal{E}}
\newcommand{\cC}{\mathcal{C}}
\newcommand{\cN}{\mathcal{N}}
\newcommand{\cB}{\mathcal{B}}
\newcommand{\cS}{\mathcal{S}}
\newcommand{\cZ}{\mathcal{Z}}
\newcommand{\cV}{\mathcal{V}}
\newcommand{\cO}{\mathcal{O}}
\newcommand{\cM}{\mathcal{M}}
\newcommand{\cY}{\mathcal{Y}}
\newcommand{\cX}{\mathcal{X}}
\newcommand{\cG}{\mathcal{G}}
\newcommand{\cL}{\mathcal{L}}
\newcommand{\cT}{\mathcal{T}}
\newcommand{\cH}{\mathcal{H}}
\newcommand{\cI}{\mathcal{I}}
\newcommand{\cD}{\mathcal{D}}
\newcommand{\cP}{\mathcal{P}}
\newcommand{\cR}{\mathcal{R}}
\newcommand{\vY}{\mathbf{Y}}
\newcommand{\vX}{\mathbf{X}}
\newcommand{\vZ}{\mathbf{Z}}
\newcommand{\vP}{\mathbf{P}}
\newcommand{\vw}{\boldsymbol{\omega}}
\newcommand{\vc}{\mathbf{c}}
\newcommand{\vp}{\mathbf{p}}
\newcommand{\vr}{\mathbf{r}}
\newcommand{\vu}{\mathbf{u}}
\newcommand{\vv}{\mathbf{v}}
\newcommand{\vx}{\mathbf{x}}
\newcommand{\vy}{\mathbf{y}}

\newcommand{\vlambda}{\boldsymbol{\lambda}}
\newcommand{\valpha}{\boldsymbol{\alpha}}
\newcommand{\vtheta}{\boldsymbol{\theta}}
\newcommand{\vbeta}{\boldsymbol{\beta}}
\newcommand{\vtau}{\boldsymbol{\tau}}
\newcommand{\vs}{\boldsymbol{s}}

\newcommand{\eps}{\epsilon}

\newcommand{\range}[1]{[\![#1]\!]}
\newcommand{\E}{\mathbb E}
\newcommand{\bP}{\mathbb P}
\newcommand{\R}{\mathbb R}
\newcommand{\I}{\mathbb{I}}

\newcommand{\abs}[1]{\left\lvert#1\right\rvert}
\newcommand{\norm}[1]{\left\lVert#1\right\rVert}
\newcommand{\ie}{{i.e.,~\xspace}}
\newcommand{\eg}{{e.g.,~\xspace}}
\newcommand{\etc}{{etc.}}
\newcommand{\argmax}{\mathrm{argmax}}
\newcommand{\argmin}{\mathrm{argmin}}

\newcommand{\PB}{\mathsf{PB}}
\newcommand{\BR}{\mathsf{BR}}
\newcommand{\stat}{\mathrm{stat}}
\newcommand{\DC}{\mathsf{DC}}

\section{Introduction}
\label{sec:intro}
In many applications, individual probability prediction is at the heart of a decision-making process. 
For example, in the Job Training Partnership Act (JTPA) training program \citep{bloom1997benefits}, a decision-maker may want to predict whether an individual is employed or not before assigning them to training; or in medical trials, a doctor wants to predict the probability that the patient has contracted a disease before recommending them a treatment. Since the hospital does not know the true individual probability that a particular patient is ill, they can only evaluate the individual probability predictions through its average outcome over a sufficiently large sample set. For a predictive task, the standard convention is to choose the model that maximizes \emph{accuracy}. However, previous work has shown that it is common to have multiple predictive models with similar accuracy but substantially different properties \citep{, chen2018interpretable, rodolfa2020case, d2022underspecification}. This phenomenon is called \emph{predictive} (or model) multiplicity, a line of work studied by \citet{breiman2001statistical, marx2020predictive, black2022model}.

In a predictive multiplicity scenario, the decision-maker may have two or more predictive models that are nearly equivalent in terms of accuracy but disagree on their predictions on many individual samples. In our motivating example (Figure~\ref{fig:counterexample}), the hospital has access to two models $f_1$ and $f_2$ predicting the probability of disease which are equally accurate on average over the entire population, but their predictions on a subpopulation may vastly differ. This disagreement in outcome prediction may have a disparate impact on the subpopulation if the hospital has to choose one predictor over the other to make important downstream decisions. For example, they might select a treatment based on the predicted probability that a patient has contracted a disease. Formally, given a predictive model $f$ and a decision-making loss function $\ell(y,\cdot)$, the decision-maker wants to choose a best-response action, \ie the action $a$ that minimizes $\E_{y \sim f}[\ell(', a)]$. When two models $f_1$ and $f_2$ have nearly equivalent accuracy but lead to different best-response actions, the decision-maker would not be able to identify which best-response action to take for individual patients. While predictive multiplicity offers great flexibility for the decision-maker in the model selection process, it also places an additional burden on the decision-maker to correctly navigate such freedom and justify how they use a predictive model to make downstream decisions. 
 
\citet{roth2023reconciling} attempts to address the model multiplicity issue by resolving prediction disagreement between models. Adapting techniques from the literature of multi-calibration \citep{hebert2018multicalibration}, they provide a procedure called "Reconcile" that updates the predictive models to minimize their disagreements and improve the accuracy of each model. However, we show simple settings where the reconciled predictions from \cite{roth2023reconciling} can lead the downstream decision-makers to take actions with substantially higher losses.
 We visually demonstrate this scenario in \Cref{fig:counterexample}. For a more detailed discussion on the limitation of prior work, see \Cref{sec:limitation} and \Cref{appendix:limitation}. This motivates the study of how to reconcile predictive multiplicity with an explicit focus on its impact on downstream decisions.

\begin{figure}[ht]
    \centering
    \includegraphics[width=\textwidth]{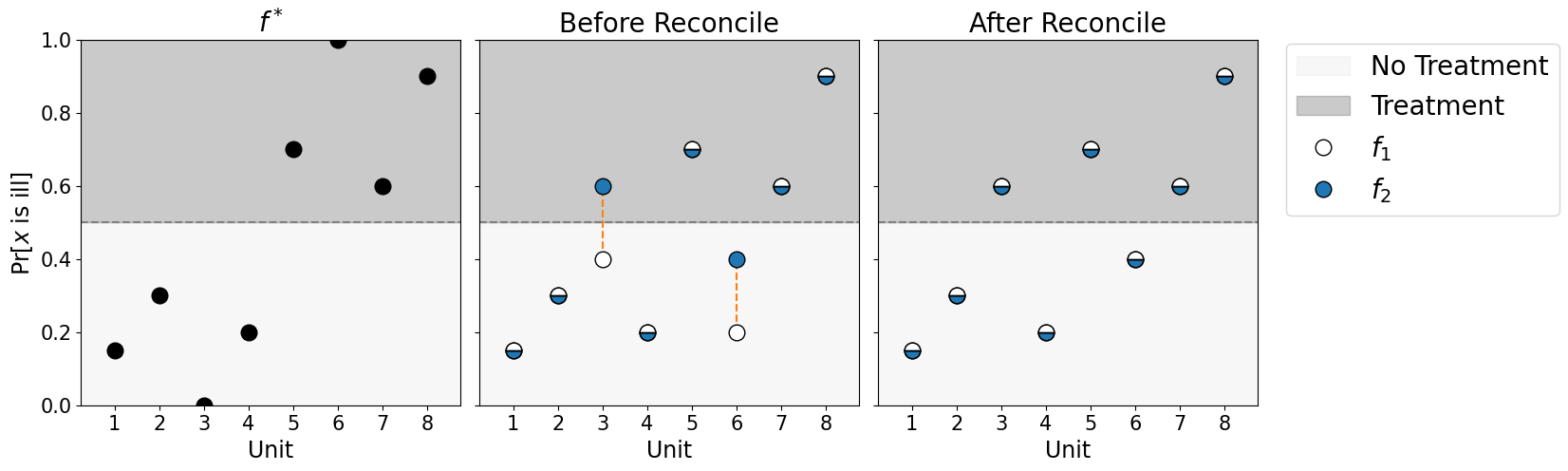}
    \caption{An illustrative example of the drawback in a prior work's attempt at addressing model multiplicity. Consider a stylized binary classification problem on a dataset with $8$ units (patients) and the hospital deciding between two actions (treatment vs. no treatment). Treatment is assigned if the predicted probability is above 1/2. \textbf{Left}: The true probability that each patient is labeled `ill'. \textbf{Middle}: The predicted probability that each patient is ill according to $f_1$ (white) and $f_2$ (blue). While these two predictors have almost the same accuracy, their individual probability predictions for patients $3$ and $6$ vastly differ. 
    \textbf{Right}: After running the Reconcile procedure of \citet{roth2023reconciling}, the individual probability predictions agree everywhere. 
    However, the best-response action of unit $3$ changed from correct (no treatment) to incorrect (treatment).
    If the hospital uses the updated $f_1$ to make their treatment recommendation, they would incur more loss than before had they not updated the predictor using Reconcile. This example is formalized in \Cref{thm:counterexample-reconcile}.}
    \label{fig:counterexample}
\end{figure}

In this work, our goal is to leverage tools from multi-calibration \citep{hebert2018multicalibration} to alleviate the model multiplicity issue in high-dimensional decision-making tasks for multiple decision-makers with multiple decision-making loss functions. Specifically, we show how the decision-maker can update a pair of predictors so they approximately agree on (1) individual predictions and (2) best-response actions for each individual in the downstream decision-making task. Our procedure ensures that the number of disagreements in best-response actions decreases over time, which enables the decision-makers to confidently use either of the updated predictors to justify their decisions.   

\paragraph{Overview of Paper.} We study the problem of reconciling model multiplicity for multiple downstream decision-making tasks, where the decision-makers have two predictive models with nearly equivalent accuracy but may lead to vastly different best-response actions for a significant number of individuals in the population. Our key contributions are summarized as follows.
\begin{itemize}
    \item In \Cref{sec:prelim}, we formulate the problem of model multiplicity from the perspective of the decision-makers. 
    In \Cref{sec:limitation}, we formalize our motivating example in Figure~\ref{fig:counterexample} and show that it is insufficient to only update two predictive models so that they have improved accuracy and nearly agree on their individual predictions almost everywhere.
    \item In \Cref{sec:reconcile}, we introduce an algorithm, ReDCal, that outputs predictive models that are (1) calibrated to a finite set of downstream decision-making tasks and (2) approximately agree on their predictions and best-response actions almost everywhere for each downstream task. 
    \item In \Cref{sec:finite-sample}, we extend our analysis to the setting where one does not have direct access to the true distribution and instead only has a validation dataset with samples drawn i.i.d from the underlying distribution. We show that the guarantees obtained using the empirical distribution can be translated to the unknown underlying distribution.
    \item Finally, in \Cref{sec:experiment}, we empirically evaluate the performance of the proposed algorithm on real-world datasets and show our improvement over the benchmark prior work in resolving disagreement in downstream decision-making tasks. 
\end{itemize}
\subsection{Related Work}
\paragraph{Model multiplicity.} 
Within the literature on predictive multiplicity, our work builds off the line of work focusing on predicting individual probabilities \citep{marx2020predictive, d2022underspecification, black2022model, breiman2001statistical}, where solving an error minimization problem for some prediction tasks can lead to multiple solutions with roughly similar performance in terms of accuracy. \citet{sandroni2003reproducible} showed that one cannot empirically distinguish the outcomes from a predictor encoding the true individual probabilities from one without in isolation. \citet{nabil2008comparative, feinberg2008testing} provided comparative tests to differentiate between the true probability predictor and one that is not. Particularly, \citet{feinberg2008testing} relied on \emph{cross-calibration}, \ie calibration conditional on the predictions of both models to empirically falsify one of them. For downstream decision-making, \citet{garg2019tracking} worked on refining predictors and provided an algorithm that produces a predictor $f_3$ that is cross-calibrated with respect to both $f_1$ and $f_2$. An alternative framework studied by  \citet{globusharris2022algorithmic} seeks to update models that are sub-optimal for different subsets of the population, following the 'bug bounties' approach used by the software and security communities. 

Particularly relevant to our work is \citet{roth2023reconciling}, which aims to reconcile different predictors with equivalent errors such that the updated predictors both have lower errors compared to the initial models and approximately agree on their prediction on almost all units. Despite the similarity in our motivation, our results go beyond the binary classification setting considered in \citet{roth2023reconciling}. In our model, we consider reconciling predictors for both regression and multi-class classification problems, and their impact on the downstream decision-making tasks. In \Cref{sec:limitation}, we provide a numerical example where simply reconciling the probability predictions according to \citet{roth2023reconciling} can lead to additional losses in downstream decision-making tasks.

\paragraph{Calibration and Multi-calibration.} Our work draws on techniques from the growing literature on multi-calibration \citep{hebert2018multicalibration, kim2019multiaccuracy, dwork2019learning, shabat2020sample, jung2020moment, dwork2021outcome, jung2022batch, haghtalab2023unifying, deng2023happymap, noarov2023highdimensional}. Multi-calibration has been used as a notion of fairness as it guarantees calibration for any identifiable group. 

Within the framework of multi-calibration, the work most related to ours is that of \citet{zhao2021calibrating}, who considered decision calibration with respect to all classes of loss functions. Similar to us, \citet{zhao2021calibrating} takes the perspective of a decision-maker who wants to ensure the predictive models are \emph{indistinguishable} from the true probability when they are used to make downstream decisions. However, two decision-calibrated models can still disagree on their individual predictions and best-response actions for sufficiently many units. In our motivating example, a hospital with two decision-calibrated predictors may still want to make their predictive models approximately agree on their predictions for almost all individuals in the population and lead to the same downstream decision, \ie whether to recommend treatment or not based on the predicted probability that a patient has contracted a disease. We formalize this example in \Cref{sec:limitation} and provide empirical experiments to show our improvement over their result in \Cref{sec:experiment}.

An independent and concurrent work by \citet{globusharris2024model} considers ensembling multiple predictive models for high-dimensional downstream decision-making tasks. While their work also leverages techniques from multi-calibration, their goal is to output an ensembled predictor whose self-estimated expected payoff is accurate and whose induced policy has a payoff at least as high as the maximum self-assessed payoff of individual models. In contrast, our interest is in \textit{reducing} the downstream decision losses by resolving the differences between equivalent predictors and mitigating model multiplicity for decision-making.

\section{Problem Formulation}
\label{sec:prelim}
\paragraph{Notation.} Throughout this paper, we use subscripts $i$ to index different predictions, superscripts $t$ to index different time-steps, and $a$ to index actions. For $K \in \mathbb{N}$, we use the shorthand $[K] \coloneqq \{1,2, \cdots, K \}$. $\Delta(\cX)$ denote the set of possible distributions over $\cX$. 

We consider the prediction problem with random variables $x$ and $y$, where $x \in \cX$ represents the features and $y \in \cY$ represents the labels. We focus on the regression problem in which the label domain is real-valued and bounded: $\cY \subset [0,1]^d$. Our formulation also permits the multi-class classification problem by writing the label $y$'s as one-hot vectors, \eg for $\abs{\cY} = 3$, we can write $\cY = \{ [1, 0, 0], [0, 1, 0], [0, 0, 1]\}$.  

We denote $\cD \in \Delta(\cX \times \cY)$ as the true distribution over the pairs of features-label $(x,y)$. In practice, we will not have access to $\cD$, and instead only know a set of $n$ data points $D$ sampled i.i.d from $\cD$. In such case, we consider the dataset $D = \{(x_1, y_1), \cdots, (x_n, y_n) \}$ to be the \emph{empirical distribution} over $D$, which is a discrete distribution that place uniform weight $\nicefrac{1}{n}$ on each sample $(x,y) \in D$.

A predictor is a map $f: \cX \rightarrow [0,1]^d$. Our goal is to find the Bayes optimal predictor $f^*: \cX \rightarrow [0,1]^d$ such that for all $x \in \cX$, $f^*(x) = \E_{(x,y) \sim \cD}[y| x]$ is the \emph{conditional label expectation given $x$}. 

\subsection{Model Evaluation}

Given a predictor $f \in [0,1]^d$, we evaluate $f$ via its squared error, \ie its expected deviation from the true label. We formalize this objective in the following definition.
\begin{definition}[Brier Score] The squared error (also known as Brier score) of a predictor $f$ evaluated on distribution $\cD$ is given as:
    \begin{equation*}
        B(f, \cD) = \E_{(x,y) \sim \cD}[\norm{f(x) - y}_2^2]
    \end{equation*}
When we only have a dataset $D = \{ (x_1, y_1), \cdots, (x_n, y_n) \}$, the empirical Brier score is given as:
    \begin{equation*}
        B(f, D) = \frac{1}{n} \sum_{i=1}^n \norm{f(x_i) - y_i}_2^2
    \end{equation*}
\end{definition}

Note that we use the Brier score as our metric because it can be accurately estimated given access to only the samples from the distribution. Moreover, among all possible predictors, the Brier score is minimized by the Bayes optimal predictor $f^*$. 

\begin{lemma}
    Fix any distribution $\cD$ and let $f^*(x) = \E_{(x,y) \sim \cD}[y | x]$ represent the true conditional label encoded by $\cD$. Let $f: \cX \rightarrow [0,1]^d$ be any other model. Then we have $B(f^*, \cD) \leq B(f, \cD)$.
    %
\end{lemma}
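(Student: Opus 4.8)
The plan is to use the standard orthogonality decomposition of the squared error around the conditional mean. Fix an arbitrary predictor $f:\cX\to[0,1]^d$. I would start from the identity obtained by inserting $f^*(x)$ inside the norm:
\[
\norm{f(x)-y}_2^2 = \norm{f(x)-f^*(x)}_2^2 + 2\langle f(x)-f^*(x),\, f^*(x)-y\rangle + \norm{f^*(x)-y}_2^2 .
\]
Then I would take $\E_{(x,y)\sim\cD}$ of both sides and argue that the cross term vanishes, which immediately gives $B(f,\cD) = \E_{(x,y)\sim\cD}[\norm{f(x)-f^*(x)}_2^2] + B(f^*,\cD) \geq B(f^*,\cD)$, since the first term is nonnegative.

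The only step requiring care is showing the cross term is zero, and I would do this by the tower property of conditional expectation: conditioning on $x$, both $f(x)$ and $f^*(x)$ are deterministic, so
\[
\E_{(x,y)\sim\cD}\big[\langle f(x)-f^*(x),\, f^*(x)-y\rangle \,\big|\, x\big] = \big\langle f(x)-f^*(x),\, f^*(x) - \E_{(x,y)\sim\cD}[y\mid x]\big\rangle = 0,
\]
because $f^*(x) = \E_{(x,y)\sim\cD}[y\mid x]$ by definition. Taking the outer expectation over $x$ then kills the cross term entirely.

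There is no real obstacle here — the argument is a one-line bias/variance-style decomposition — but the place to be precise is the measure-theoretic justification of pulling $f(x)-f^*(x)$ out of the inner conditional expectation and the use of linearity of the inner product with the conditional expectation operator. I would also note that the same computation works verbatim for the empirical distribution $D$ (with $f^*$ replaced by the empirical conditional mean on the atoms of $D$), which is why the statement is phrased for a general $\cD$. Finally, I would remark that equality holds exactly when $f(x) = f^*(x)$ $\cD$-almost surely, characterizing the minimizer, though this is not needed for the stated inequality.
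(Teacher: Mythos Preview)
Your argument is correct: the orthogonality decomposition around the conditional mean, together with the tower property to kill the cross term, is exactly the standard proof and there are no gaps. Note, however, that the paper does not actually supply a proof for this lemma---it is stated as a basic fact without justification---so there is nothing in the paper to compare your approach against; your write-up would serve as a complete proof where the paper has none.
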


Hence, given two predictors $f_1$ and $f_2$, if we can verify empirically from the observable data that $B(f_1, \cD) \leq B(f_2, \cD)$, then we can empirically falsify that $f_2$ encodes the true conditional label. 
\subsection{Downstream Decision-Making Tasks and Loss Functions}
Beyond our initial goal of finding a good estimate for the true conditional predictor $f^*$, we are also interested in using our predictors for downstream decision-making problems. Formally, we consider a loss minimization problem, where the decision-maker has a set of possible actions $\cA$ and a loss function $\ell: \cY \times \cA \rightarrow [0,d]$. Wlog, we only consider action set $\cA = [K]$, \ie there are $K$ possible actions. In this paper, we assume that the loss function does not directly depend on the features $x$ and is linear in $\cY$. That is, for each action $a\in \cA$, there exists some $\ell_a \in [0,1]^d$ such that
\begin{equation*}
    \ell(y, a) = \langle y, \ell_a \rangle
\end{equation*}
We write $\cL = \{ \ell: \cY \times \cA \rightarrow [0,d]\}$ to denote a finite family of loss functions. In general, we consider the setting with multiple different decision-makers, each using a different linear loss function in $\cL$. For any loss function $\ell: \cY \times \cA \rightarrow \mathbb{R}$, we can rescale each coordinate of $\ell_a$ to be between $[0,1]$. 
%

Given a predictor $f$ and a loss function $\ell$, the decision-maker selects an action $a \in \cA$ that minimizes the expected loss. We define the best-response policy taken by the decision-maker as follows.
\begin{definition}[Best-response policy]
    Given a loss function $\ell$, a predictor $f$ and the action set $\cA$, the best-response policy for $\ell$ is given as 
    \begin{equation*}
        \pi_\ell^\BR(f(x)) = \argmin_{a \in \cA} \langle f(x), \ell_a \rangle.
    \end{equation*}
\end{definition}
%
%
\subsection{Calibration}
In our setting, we consider the decision-maker only having access to some pre-trained predictors $f$ given by a third-party. For instance, a data scientist trained a pair of models on an image dataset without exact knowledge of how the downstream decision-maker will use such predictors. We may imagine the decision-maker as a hospital considering whether to recommend treatment to certain patients based on the predicted probability that the patient has contracted a skin disease. Since the hospital's treatment-recommendation algorithm is not known to the public (and the data scientist), we assume that the data scientist initially aim to minimize the squared error in their predictions. 

Since the hospital believes that the input predictors may not perform well according to their own loss function, they want the data scientist to convey trust through other performance guarantees of the predictors. One such guarantee is multi-calibration with respect to a finite set of loss functions $\cL \ni \ell$ and a set of events $\cE$ on the best-response policy, \ie if the loss function $\ell$ belongs to $\cL$, the decision-maker should be able to accurately compute the expected loss of choosing an action using the best-response policy $\pi_\ell^\BR$. Formally, we let $E_{a, \ell}(f(x), x)$ denote the action selection events: 
\begin{definition}[Best-response Events] Given a predictor $f$ and a loss function $\ell$, for each action $a \in [K]$, define the event 
\begin{equation*}
    E_{\ell, a}(f(x), x) = \mathbf{1} \{ x: \pi_\ell^\BR (f(x)) = a \}
\end{equation*}
and let $\cE = \{ E_{\ell, a} \}_{a \in [K], \ell \in \cL}$.
\label{def:best-response-event}
\end{definition}
%
%
Given a set of events $\cE$, we can define an approximate notion of multi-calibration with respect to $\cE$. 

\begin{definition}[$\beta$-approximate decision calibration] A predictor $f$ is $\beta-$decision calibrated with respect to the set of best-response events $\cE$ if for all $E_{\ell, a} \in \cE$, we have:
\begin{equation*}
    \norm{\E_{(x,y) \sim \cD}[(y - f(x)) \cdot E_{\ell, a}(f(x), x)]}_2 \leq \beta.   
\end{equation*}
\label{def:apprx-decision-calibration}
\end{definition}
This definition follows from an equivalent definition of decision calibration in \citet{zhao2021calibrating}. The main difference is we define calibration with respect to a set of events on the best-response actions following the formulation of multi-calibration for online learning in \citet{noarov2023highdimensional}  and a generalization of multi-calibration in \cite{deng2023happymap}.
This definition implies that if a predictor $f$ is $\beta-$decision calibrated with respect to the best-response events $\cE$, then the decision-maker can accurately estimate the expected loss from using $f$ to make decisions. 
\begin{lemma} [\citep{zhao2021calibrating}]\label{lem: cal_loss_est}
    For all $a, a' \in \cA, \ell \in \cL$, if $f$ is $\beta$-decision-calibrated with respect to the best-response events $\cE$ , then the loss estimation satisfies 
    \begin{align*}
        \left|\E_{(x,y) \sim \cD}[\ell(y, a') \cdot E_{\ell, a}(f(x), x)] - \E_{x \sim \cD_\cX}[\langle f(x), \ell_{a'}\rangle \cdot E_{\ell, a}(f(x), x)]\right| \leq \beta \sqrt{d}
    \end{align*}
\end{lemma}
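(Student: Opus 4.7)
The plan is to unpack the linear structure of the loss and then apply Cauchy--Schwarz to relate the target quantity to the decision-calibration error. First, I would use the linearity assumption $\ell(y, a') = \langle y, \ell_{a'}\rangle$ to rewrite the first expectation as $\E_{(x,y)\sim\cD}[\langle y, \ell_{a'}\rangle \cdot E_{\ell,a}(f(x),x)]$. The second expectation, once I move $\langle f(x), \ell_{a'}\rangle$ into the same form, is $\E_{(x,y)\sim\cD}[\langle f(x), \ell_{a'}\rangle \cdot E_{\ell,a}(f(x),x)]$; here I am using that both $f(x)$ and $E_{\ell,a}(f(x),x)$ are deterministic functions of $x$ alone, so taking the expectation over the joint $\cD$ agrees with the expectation over the marginal $\cD_\cX$.

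Subtracting, linearity of expectation gives the difference inside the absolute value as $\E_{(x,y)\sim\cD}[\langle y - f(x), \ell_{a'}\rangle \cdot E_{\ell,a}(f(x),x)]$. Since $\ell_{a'}$ is a fixed vector and $E_{\ell,a}(f(x),x)$ is a scalar indicator, I can pull the inner product outside the expectation and write this as $\langle \E_{(x,y)\sim\cD}[(y - f(x))\cdot E_{\ell,a}(f(x),x)],\, \ell_{a'}\rangle$.

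From here the finish is immediate. Cauchy--Schwarz bounds the absolute value of this inner product by the product of the two vectors' $\ell_2$ norms. The $\beta$-decision-calibration hypothesis (Definition~\ref{def:apprx-decision-calibration}) bounds the first factor by $\beta$, and since each coordinate of $\ell_{a'}$ lies in $[0,1]$ (by the rescaling convention), we have $\norm{\ell_{a'}}_2 \leq \sqrt{d}$. Multiplying yields the claimed $\beta\sqrt{d}$ bound. There is no substantive obstacle; the only step requiring a little care is noting that $f(x)$ and the event indicator depend only on $x$, so that the two expectations can be combined into a single expectation over $\cD$ whose integrand factors nicely into the calibration residual $(y-f(x))\cdot E_{\ell,a}(f(x),x)$ paired against the fixed loss vector $\ell_{a'}$.
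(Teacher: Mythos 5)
Your proof is correct and follows the standard argument: rewrite both expectations over the joint distribution (valid since $f(x)$ and the indicator depend only on $x$), combine them into $\langle \E[(y-f(x))E_{\ell,a}(f(x),x)], \ell_{a'}\rangle$, and apply Cauchy--Schwarz with $\norm{\ell_{a'}}_2\le\sqrt{d}$ and the calibration bound $\beta$. The paper cites this lemma from prior work without reproducing a proof, but this is exactly the technique the paper itself uses in its related bounds (e.g., the $\beta\sqrt d$ step in the proof of Lemma~\ref{lem: loss_each_round}), so there is nothing to flag.
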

\subsection{Limitations of Prior Works}
\label{sec:limitation}
In this section, we show that improving the accuracy until the two predictors agree on their predictions almost everywhere is not a sufficient solution to our problem. In our analysis below, we consider a stylized problem with $\cY = \{0,1\}$ and $\cA = \{0,1\}$, \ie binary class and binary action space. A predictor here is $f: \cX \rightarrow [0,1]$, and the optimal predictor is $f^*(x) = \Pr_{(x', y') \sim \cD}[y' = 1 | x'=x]$. As shorthand, we denote $f_1(1)$ as the probability of unit $1$ being labeled $1$. The loss is defined as
\begin{align}
    &\ell(0,0) = \langle [1,0], [0,1] \rangle = 0, \quad 
    \ell(1,0) = \langle [0,1], [0,1] \rangle = 1, \nonumber\\
    &\ell(0,1) = \langle [1,0], [1,0] \rangle = 1, \quad 
    \ell(1,1) = \langle [0,1], [1,0] \rangle = 0,
    \label{eq:counterexample-loss}
\end{align}
That is, for any $x$, the best-response policy is to take action $0$ if $f(x) \leq \nicefrac{1}{2}$ and action $1$ otherwise. 

\paragraph{Reconcile individual predictions.}
Prior work by \citet{roth2023reconciling} considers the model multiplicity problem for individual probability predictions. Their proposed algorithm, Reconcile (\Cref{alg: reconcile-brier}), returns a pair of predictors that has a smaller Brier score than the input predictors and approximately agree on their predictions on almost all units. In the following theorem, we show that the best-response policy induced by the predictors updated by Reconcile might lead to a higher expected loss than the ones they started with.

\begin{theorem}
For any $\alpha \in (0,\nicefrac{1}{3})$, $\eta \in (0,1)$, there exists a pair of predictors $f_1, f_2$ with equivalent accuracy such that after running \Cref{alg: reconcile-brier}, the output models $f_1^T, f_2^T$ agree on their individual predictions everywhere, but there exists a loss function $\ell(y,a)$ such that $f_1^T$ and $f_2^T$ induce worse losses compared to the original models.  
\label{thm:counterexample-reconcile}
\end{theorem}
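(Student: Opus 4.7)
The plan is to construct, for any given $\alpha \in (0,\nicefrac{1}{3})$ and $\eta \in (0,1)$, an explicit finite distribution together with a pair of initial predictors $f_1, f_2$ formalizing the cartoon in Figure~\ref{fig:counterexample}, and then trace through \Cref{alg: reconcile-brier} to verify that the post-Reconcile models are strictly worse on the downstream loss defined in \eqref{eq:counterexample-loss}.

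First I would set up the instance. Let $\cX$ consist of finitely many points (e.g.\ the eight patients of Figure~\ref{fig:counterexample}) under the uniform distribution, and split them into three groups: a ``bulk'' group on which $f_1$ and $f_2$ already agree and are accurate, plus two carefully chosen ``disagreement'' points $x_A, x_B$. On $x_A$ and $x_B$ I would place true conditional expectations $f^*(x_A), f^*(x_B)$ strictly below $\nicefrac{1}{2}$, so that the best-response under $f^*$ is action $0$ (no treatment). I would then choose $f_1(x_A), f_2(x_A)$ and $f_1(x_B), f_2(x_B)$ symmetrically around $\nicefrac{1}{2}$ with gap at least $\alpha$, so that: (i) $f_1$ predicts below $\nicefrac{1}{2}$ on $x_A$ (correct action) and above $\nicefrac{1}{2}$ on $x_B$ (wrong action), with $f_2$ reversed; (ii) the two Brier scores are exactly equal by symmetry; and (iii) the pointwise disagreement exceeds the threshold $\alpha$ triggering an update in Reconcile. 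The weights on $x_A, x_B$ versus the bulk must be tuned so that the $\eta$-mass condition of Reconcile also triggers; this is just a matter of putting enough mass on the disagreement points.

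Next I would run \Cref{alg: reconcile-brier} on this instance. Because Reconcile is driven solely by the Brier-score gradient on the disagreement region, and because my construction is symmetric in $f_1$ and $f_2$ around the common value $\overline{f}(x) = \tfrac{1}{2}(f_1(x)+f_2(x))$, the natural Brier-reducing update collapses the two predictors onto $\overline{f}$ at $x_A$ and $x_B$. I would choose the initial values so that $\overline{f}(x_A) > \nicefrac{1}{2}$ while $f^*(x_A) < \nicefrac{1}{2}$ (and similarly for $x_B$): this is the crucial point, since Brier distance to $f^*$ can decrease while still being on the wrong side of the decision boundary. After one (or finitely many) Reconcile rounds, the predictors agree everywhere and have smaller Brier score than the originals, yet the induced best-response policy $\pi_\ell^{\BR}$ flips from $0$ (correct) to $1$ (incorrect) on both $x_A$ and $x_B$, so that the downstream expected loss under $\ell$ strictly exceeds the loss of either original predictor.

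The main obstacle is ensuring the construction works uniformly across all admissible $(\alpha,\eta)$: I need the initial disagreement to exceed $\alpha$, the measure of the disagreement region to exceed $\eta$, the two Brier scores to coincide exactly, and the averaged post-update prediction to land on the wrong side of $\nicefrac{1}{2}$ while being closer to $f^*$. Parametrizing $f^*(x_A), f_1(x_A), f_2(x_A)$ in terms of $\alpha$ — for instance $f^*(x_A) = \tfrac{1}{2} - c\alpha$ for a small constant $c$, $f_1(x_A) = \tfrac{1}{2} - \tfrac{\alpha}{2} - \delta$, $f_2(x_A) = \tfrac{1}{2} + \tfrac{\alpha}{2} + \delta$ for a suitable $\delta > 0$ — makes all four conditions simultaneously satisfiable provided $\alpha < \nicefrac{1}{3}$, which is exactly the range asserted in the theorem. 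The routine remaining work is the arithmetic verification that the Brier scores match before the update, strictly decrease after, and that the expected loss under \eqref{eq:counterexample-loss} strictly increases.
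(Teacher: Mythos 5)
There is a genuine gap: your sketch misrepresents what \Cref{alg: reconcile-brier} actually does, and your concrete construction would not yield the counterexample. Reconcile does not ``collapse the two predictors onto $\overline{f} = \tfrac{1}{2}(f_1+f_2)$''; at each round it selects one predictor $f_i$ and shifts it by the \emph{constant} $\Delta_t = \E[y \mid x \in U^{\bullet}] - \E[f_i(x) \mid x \in U^{\bullet}]$ uniformly over the disagreement region, i.e.\ it moves $f_i$ toward the true conditional label mean on that region. In your construction the disagreement regions $U^{<}$ and $U^{>}$ are the singletons $\{x_A\}$ and $\{x_B\}$ (the bulk agrees), so the patch sets the selected predictor to exactly $f^*(x_A)$ (resp.\ $f^*(x_B)$), which lies below $\nicefrac{1}{2}$ by your own choice; the wrong action is thereby \emph{corrected}, and the downstream loss decreases. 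Separately, your closing parametrization $f_1(x_A) = \tfrac{1}{2}-\tfrac{\alpha}{2}-\delta$, $f_2(x_A)=\tfrac{1}{2}+\tfrac{\alpha}{2}+\delta$ gives $\overline{f}(x_A) = \tfrac{1}{2}$ exactly, contradicting the requirement $\overline{f}(x_A) > \tfrac{1}{2}$ that your argument needs even under the (incorrect) averaging dynamics.

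The idea you are missing is that the disagreement region must aggregate points with heterogeneous true labels, so that the region-level constant correction is right on average but wrong pointwise. The paper takes $\cX=\{1,2\}$ with $f^*(1)=0$, $f^*(2)=1$, and $f_1$ uniformly too low on both points; the patch $\Delta = \E[y\mid U] - \E[f_1 \mid U] = \phi$ shifts $f_1(1)$ from $\tfrac{1}{2}-\tfrac{\phi}{2}$ up to $\tfrac{1}{2}+\tfrac{\phi}{2}$, pushing unit $1$ (whose true label is always $0$) across the decision threshold in the wrong direction while the Brier score still improves. Your intuition that ``Brier distance can decrease while landing on the wrong side of the boundary'' is the right high-level target, but without a multi-point disagreement region with conflicting labels the Reconcile update cannot produce that effect. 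A minor additional note: insisting on exactly equal Brier scores is unnecessary; the theorem only requires near-equivalent accuracy, and the paper's predictors differ by $\phi^2$.
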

\begin{proof}

Consider a setting with $\cX = [2]$ and $\Pr[x = 1] = \Pr[x=2] = 0.5$. For any $0 < \alpha < \nicefrac{1}{3}$, let $\phi \geq \alpha$, we consider the two predictors $f_1, f_2$ defined as follows: 
\begin{align}
    f_1(1) = \frac{1}{2} - \frac{\phi}{2}, \quad  f_1(2) = \frac{1}{2} - \frac{3\phi}{2}, \quad 
    f_2(1) = \frac{1}{2} + \frac{\phi}{2}, \quad f_2(2) = \frac{1}{2} - \frac{\phi}{2}
\end{align}
and the true probability of each unit being labeled $1$ are $f^*(1) = 0$ and $f^*(2) = 1$.

The Brier scores of $f_1$ and $f_2$ differ only by $\phi^2$, but their individual predictions differ for both feature $x = 1$ and $x=2$. We can run \Cref{alg: reconcile-brier} and patch $f_1$ to get the updated model $f_1^T$ with 
\begin{equation*}
    f_1^T(1) = \frac{1}{2} + \frac{\phi}{2} = f_2(1), \quad f_1^T(2) = \frac{1}{2} - \frac{\phi}{2} = f_2(2).
\end{equation*}
Consider the loss function $\ell$ defined as  \Cref{eq:counterexample-loss}.
The change in expected loss after patching $f_1$ is
\begin{align*}
    &\E_{(x,y) \sim \cD} [\ell(y, \pi_\ell^\BR(f_1^T(x))) - \ell(y, \pi_\ell^\BR(f_1(x)))] 
    = \frac{1}{2}> 0.
\end{align*}
Therefore, no matter how small we let $\alpha$ and $\eta$ be, the loss of predictor $f_1$ increases by a constant amount after running \Cref{alg: reconcile-brier}. For the theoretical guarantees of \Cref{alg: reconcile-brier}, see \Cref{appendix:limitation}.   
\end{proof}
Moreover, we provide a counterexample to show that it is insufficient to only ensure each individual predictor is approximately decision-calibrated using \Cref{alg: decision_cali}. 
\paragraph{Decision-Calibrated predictions.} 
Another baseline algorithm we consider is to run Decision Calibration (Algorithm~\ref{alg: decision_cali}) separately for both $f_1, f_2$. However, in the following theorem, we show that the updated predictors $f_1'$ and $f_2'$ can still disagree with each other on the best-response actions for substantially many units, indicating room for further improvement.

\begin{theorem}
    For any $\eta \in (0, \nicefrac{1}{4})$ and $\beta \in (0, \nicefrac{1}{2})$, there exists a pair of predictors $f_1, f_2$ and a loss function $\ell(y, a)$ such that after running Decision-Calibration \citep{zhao2021calibrating}, the resulting models $f_1^T, f_2^T$ are $\beta$-decision-calibrated with respect to the loss function $\ell$. There exists a set of units $x$ with probability mass $2\eta$ where $f_1^T$ and $f_2^T$ disagree on the individual best-response actions. 
\end{theorem}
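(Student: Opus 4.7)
The plan is to exhibit a pair of predictors $f_1, f_2$ that already satisfy $\beta$-decision calibration, so that running \Cref{alg: decision_cali} leaves them unchanged, yet they induce different best-response actions on a set of probability mass exactly $2\eta$. I would work in the same binary-label, binary-action setting as \Cref{thm:counterexample-reconcile} with the loss function $\ell$ from \Cref{eq:counterexample-loss}, so that the best-response policy chooses action $1$ precisely when $f(x) > \nicefrac{1}{2}$.

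Concretely, I take $\cX = \{x_1, x_2, x_3\}$ with $\Pr[x_1] = \Pr[x_2] = \eta$ and $\Pr[x_3] = 1 - 2\eta$, and set the Bayes predictor to $f^*(x_1) = f^*(x_2) = \nicefrac{1}{2}$ and $f^*(x_3) = 0$. Choosing a perturbation $\delta = \min\bigl(\beta/(4\eta),\,\nicefrac{1}{4}\bigr)$, I define $f_1(x_1) = f_1(x_2) = \nicefrac{1}{2} + \delta$, $f_2(x_1) = f_2(x_2) = \nicefrac{1}{2} - \delta$, and $f_1(x_3) = f_2(x_3) = 0$. Then $f_1$ plays action $1$ on $\{x_1, x_2\}$ and action $0$ on $x_3$, while $f_2$ plays action $0$ everywhere, giving a best-response disagreement set of mass exactly $2\eta$.

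To verify $\beta$-decision calibration of $f_1$, I would compute, for each action $a$, the unnormalized residual $\E[(y - f_1(x))\,E_{\ell,a}(f_1(x), x)]$ from \Cref{def:apprx-decision-calibration}. The action-$1$ level set is $\{x_1, x_2\}$, contributing $2\eta \cdot (\nicefrac{1}{2} - (\nicefrac{1}{2}+\delta)) = -2\eta\delta$ in magnitude, while the action-$0$ level set is the singleton $\{x_3\}$ with residual $0$. By the choice of $\delta$, $|2\eta\delta| \leq \beta/2 < \beta$. A sign-flipped calculation certifies $f_2$ on its action-$0$ level set. Since \Cref{alg: decision_cali} uses $\beta$-decision calibration as its halting condition, it returns $f_1^T = f_1$ and $f_2^T = f_2$, preserving the mass-$2\eta$ disagreement; even if one prefers an adversarial reading of the algorithm, outputting the already-calibrated input pair is consistent with its post-condition.

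The main subtlety I anticipate is ensuring $\delta$ can be chosen simultaneously small enough that $2\eta\delta < \beta$, large enough to strictly straddle the $\nicefrac{1}{2}$ threshold, and in a range keeping all predictions inside $[0,1]$. Since $\beta \in (0, \nicefrac{1}{2})$ and $\eta \in (0, \nicefrac{1}{4})$ force $\beta/(4\eta) > 0$ and $\delta \in (0, \nicefrac{1}{4}]$, these constraints are simultaneously feasible for every admissible $(\eta, \beta)$, so the construction goes through uniformly. The takeaway of the argument is that decision calibration only constrains aggregate residuals over best-response level sets; it places no binding constraint on the boundary location itself, which is exactly what individual-level best-response agreement requires.
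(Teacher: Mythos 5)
Your proposal is correct and follows essentially the same strategy as the paper's proof: construct a pair of predictors that already satisfy $\beta$-decision calibration (so \Cref{alg: decision_cali} is a no-op) yet whose best-response actions differ on a set of mass exactly $2\eta$. Your three-point construction with the explicit perturbation $\delta = \min(\beta/(4\eta), \nicefrac{1}{4})$ is a clean instantiation of the same idea (the paper's version makes the residuals exactly zero rather than merely $\le\beta$), and your calibration and feasibility checks are sound.
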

\begin{proof}
For any $\eta \in (0, \nicefrac{1}{4}), \beta \in (0, \nicefrac{1}{2})$, let $\cX = [4]$, with $\Pr[1] = \Pr[4] = \nicefrac{1}{2}-\eta $ and $\Pr[x=2] = \Pr[x=3] = \eta$. Consider the predictors $f_1, f_2$ as follows:
\begin{align}
    &f_1(1) = f_1(2) = f_2(1) = f_2(3) =  \frac{\beta}{4} - 2\eta \beta + 2\eta, \\
    &f_1(3) = f_1(4) = f_2(2) = f_2(4)= 1-\frac{\beta}{2}, \\
    &f^*(1) = \frac{\beta}{2}, ~~ f^*(2) = f^*(3) = f^*(4) = 1-\frac{\beta}{2}.
\end{align}
Notice that
\begin{align*}
    \frac{\alpha}{4} - 2\eta \alpha + 2\eta = \frac{\alpha}{4} + 2\eta(1- \alpha) < \frac{1}{4} + \frac{1}{4} =\frac{1}{2}.
\end{align*}
The best-response policy for each predictor is
\begin{align}
    \pi_\ell^\BR(f_1(1)) = \pi_\ell^\BR(f_1(2)) = 0,~~& \pi_\ell^\BR(f_1(3)) = \pi_\ell^\BR(f_1(4)) = 1, \\
    \pi_\ell^\BR(f_2(1)) = \pi_\ell^\BR(f_2(3)) = 0,~~& \pi_\ell^\BR(f_2(2)) = \pi_\ell^\BR(f_1(4)) = 1.
\end{align}
For each best-response event, we have 
\begin{align*}
    \E_{(x,y) \sim \cD}[f^*(x) E_0(f_1(x), x)] 
    &= \E_{(x,y) \sim \cD}[f_1(x) E_0(f_1(x), x)], \\
    \E_{(x,y) \sim \cD}[f^*(x) E_1(f_1(x), x)] 
    &= \E_{(x,y) \sim \cD}[f_1(x) E_1(f_1(x), x)],\\
    \E_{(x,y) \sim \cD}[f^*(x) E_0(f_2(x), x)] 
    &= \E_{(x,y) \sim \cD}[f_2(x) E_0(f_2(x), x)], \\
    \E_{(x,y) \sim \cD}[f^*(x) E_1(f_2(x), x)] 
    &= \E_{(x,y) \sim \cD}[f_2(x) E_1(f_2(x), x)].
\end{align*}
That is, $f_1, f_2$ are already decision-calibrated, so running Decision Calibration (\Cref{alg: decision_cali}) will not further improve either of the two predictors. However, based on our definition of disagreement events (\Cref{def:disagree_event}), we still have
\begin{align*}
    E_{0,1} = \{2\},~~ E_{1,0} = \{3\},
\end{align*}
each with size
\begin{align*}
    \mu(E_{0,1}) = \mu(E_{1,0}) = \eta.
\end{align*}
We observe that $f_1$ and $f_2$ still disagree on the best-response action for units $x=2$ and $x=3$. 
We can further reduce the differences in best-response actions using our algorithm \Cref{alg: reconcile}. 
\end{proof}
\section{Reconcile for Decision Making}
\label{sec:reconcile}

Suppose we are given two predictors $f_1, f_2: \cX \rightarrow [0,1]^d$. We consider the model multiplicity problem with respect to the downstream decision-making problem -- where $f_1, f_2$ with nearly equivalent accuracy differ in their induced decision-making policies, but we cannot falsify either of the two from the data. Informally, our goal is to return a pair of models $f_1', f_2'$ such that: (1) for both $i \in \{1,2\}$, $f_i'$ is more accurate than $f_i$ in terms of Brier score; (2) for both $i \in \{1,2\}$, the best-response policy induced by $f_i'$ has no larger expected loss than that of $f_i$; (3) $f_1'$ and $f_2'$ approximately agree almost everywhere, indicating limited room for additional improvement.

To this end, we are interested in the region where the two predictors disagree substantially with respect to the downstream decision-making task. We define the disagreement region as follows:

\begin{definition}[Disagreement Event] \label{def:disagree_event}
    For $f_1, f_2$, margin $\alpha > 0$, and a loss function $\ell$, the disagreement event is defined for a pair of best-response  actions $a_1, a_2 \in \cA $ where $a_1 \neq a_2$ as
    \begin{align*}
    E^\alpha_{\ell, a_1,a_2}(f_1(x), f_2(x), x) = \I \Big[x \in \big \{x:& \pi_\ell^\BR(f_1(x)) = a_1 , \pi_\ell^\BR(f_2(x)) = a_2, \\ 
    &\langle f_1(x), \ell_{a_2} - \ell_{a_1} \rangle > \alpha ~\text{or}~ \langle f_2(x), \ell_{a_1} - \ell_{a_2} \rangle > \alpha \big\} \Big],
    \end{align*}
\end{definition}

As shorthand, we denote $E_{\ell, a_1, a_2}(x) = E^\alpha_{\ell, a_1, a_2}(f_1(x), f_2(x), x)$ when the predictors $f_1, f_2$ and the margin $\alpha$ are clear from context. For a finite family of loss functions $\cL$, we can always iterate through $\cL$ to identify the tuple $(\ell, a_1, a_2)$ that defines a disagreement region between $f_1$ and $f_2$. 

We say the two models approximately agree with each other when the size of the disagreement event is small enough, \ie its probability mass $\mu(E_{\ell, a_1, a_2})$ on the underlying distribution $\cD$ is small.
%
\subsection{The Reconcile Procedure} \label{sec: reconcile_distribution}
In this section, we propose our main algorithm, ReDCal (\Cref{alg: reconcile}). Whenever the decision-maker observes a large disagreement event $E_{\ell, a_1, a_2}$, the best-response action and its corresponding expected loss given by at least one of the predictors must be incorrect. 
For example, at time step $t$ and a unit $x$, if the gap between the losses after taking $a_1$ and $a_2$ according to $f_2$ is substantially different from the loss gap observed on the data, then the decision-maker can induce that $f_2$ must have been wrong in its prediction for $x$. 
Then, the decision-maker would want to 'patch' predictor $f_2$ in this time-step. 

The calibration procedure within each time-step is divided into two stages. In the first stage, we update model $f_2$ to $f_2'$ by minimizing the mean prediction error on the disagreement event, \ie minimizing $\E[\|f_2'(x) - y | E_{\ell, a_1, a_2}(x) = 1\|]$. Following the intuition from multi-calibration, updating predictor $f_2$ in this manner would improve the Brier score and produce a more accurate predictor.
However, the updated model $f'_2$ is not guaranteed to induce the correct best-response action and could instead induce some other actions that might lead to a larger expected loss. To cope with this, in the second stage, we further update $f_2'$ to a model $f_2''$ that is approximately decision-calibrated within event $E_{\ell, a_1, a_2}$ using \Cref{alg: decision_cali}. Since the loss estimation given by $f_2''$ is accurate for all best-response events within $E_{\ell, a_1,a_2}$ and we are taking actions to minimize estimated loss, we can now safely take the best-response action induced by $f_2''$. The formal description of the algorithm is given by Algorithm~\ref{alg: reconcile}. 

\begin{algorithm}[ht]
\caption{Decision Calibration}
    \begin{algorithmic}[1]
    \label{alg: decision_cali}
    \renewcommand{\algorithmicrequire}{\textbf{Input:}}
    \renewcommand{\algorithmicensure}{\textbf{Output:}}
    
    \REQUIRE Predictor $f$, loss family $\cL$, $\beta > 0$, event $E$
    \STATE Let $f^0 = f$.
    \WHILE{$f^t$ is not $\beta$-multicalibrated with respect to events $E_{\ell,a} \cap E$ for some $\ell \in \cL$}
        \STATE Let $\ell^t, a^t = \argmax_{\ell, a} \|\E_{(x,y) \sim \cD}[(y-f^t(x))E_{\ell, a}(f^t(x), x)]\|_2$
        \STATE Let $\phi^t = \E_{(x,y) \sim \cD}[y - f^t(x) | E_{\ell^t, a^t}(f^t(x), x) = 1]$
        \STATE Patch $f^{t+1}(x) = \mathrm{proj}_{[0,1]^d}(f^t(x) + \phi^t E_{\ell^t, a^t}(f^t(x), x))$
        \STATE $t = t+1$.
    \ENDWHILE
    \ENSURE $f^t$
    \end{algorithmic}
\end{algorithm}

\begin{algorithm}[ht]
\caption{Reconcile Decision Calibration (ReDCal)}
    \begin{algorithmic}[1]
    \label{alg: reconcile}
    \renewcommand{\algorithmicrequire}{\textbf{Input:}}
    \renewcommand{\algorithmicensure}{\textbf{Output:}}
    
    \REQUIRE $f_1,f_2,\cL, \eta > 0, \alpha > 0, \beta > 0$
    \STATE Let $f_1^{0} = f_1, f_2^{0} = f_2$ and $t = 0$.  
    \WHILE{$\mu(E_{\ell, a_1, a_2}) \geq \eta$ for some $a_1, a_2 \in \cA$ and $\ell \in \cL$}
        \STATE Let $\ell^t, a_1^t, a_2^t = \argmax_{\ell, a, a'} \mu(E_{\ell, a,a'})$, $E^t = E_{\ell^t, a_1^t, a_2^t}$.
        \STATE Pick 
        \begin{align*}
            i^t = \argmax_{i \in \{1,2\}} \big|&\E_{(x,y) \sim \cD}
            [\ell^t(y, a_1^t) - \ell^t(y, a_2^t) | E^t(x) = 1] \\
            &-\E_{x \sim \cX} [\ell^t(f_i(x), a_1^t) - \ell^t(f_i(x), a_2^t) | E^t(x) = 1]\big|.
        \end{align*}
        \STATE Denote $f_{i^t}^{t}$ as $f_i^t$.  Let $\phi^t = \E_{(x,y) \sim \cD}[y | E^t(x) = 1] - \E_{x \sim \cD_\cX}[f_{i}^t(x) | E^t(x) = 1]$.
        \STATE Patch $f^{t}(x) = \mathrm{proj}_{[0,1]^d}(f_{i}^t(x) + \phi^t E^t(x))$.
        \STATE Let $f_{i}^{t+1} = $ Decision-Calibration($f^t$, $\cL$, $\beta$, $E^t$). $t = t+1$. 
    \ENDWHILE
    \ENSURE $f_1^{t}, f_2^{t}$
    \end{algorithmic}
\end{algorithm}

We provide the theoretical guarantees of our proposed algorithm below. At a high level, \Cref{alg: reconcile} produces a pair of models with improved accuracy and approximately agrees on the best-response action almost everywhere. For the formal proofs of this section, see \Cref{appendix:reconcile_proofs}. 

\begin{theorem}
    For any pair of models $f_1, f_2: \cX \rightarrow [0,1]^d$, any distribution $\cD$, family of loss functions $\cL$, any loss margin $\alpha > 0$, disagreement region mass $\eta > 0$, and decision-calibration tolerance $\beta > 0$, Algorithm~\ref{alg: reconcile} updates $f_1$ and $f_2$ for $T_1$ and $T_2$ time-steps, respectively,
    and outputs a pair of models $(f_1^{T}, f_2^{T})$, such that:
    \begin{enumerate}
        \item \Cref{alg: reconcile} terminates within $T = T_1 + T_2 \leq \frac{4 \cdot d \cdot \left( B(f_1, \cD) + B(f_2, \cD) \right)}{\alpha^2 \eta}$ time-steps.
        \item The Brier scores of the final models are lower than that of the input models $(f_1, f_2)$:
        \begin{equation*}
            B(f_1^{T}, \cD) \leq B(f_1, \cD) - T_1 \cdot \frac{\alpha^2 \eta}{4d} \quad \text{and } \quad  B(f_2^{T}, \cD)\leq B(f_2, \cD) - T_2 \cdot \frac{\alpha^2 \eta}{4d} 
        \end{equation*}
        \item All the downstream decision-making losses of the final models do not increase by much compared to that of the input models $(f_1, f_2)$: for each $i \in \{1,2\}$ and for all $\ell \in \cL$,
        \begin{equation*}
            \E_{(x,y) \sim \cD}[\ell(y, \pi_\ell^\BR(f_i^T(x))] - \E_{(x,y) \sim \cD}[\ell(y, \pi_\ell^\BR(f_i(x))] \leq T_i \beta \sqrt{d} K 
        \end{equation*}
        \item The final models approximately agree on their best-response actions almost everywhere. That is, the disagreement region $E_{\ell, a_1, a_2}$ calculated using $f_1^T, f_2^T$ has small mass. For all $\ell \in \cL$,
        \begin{equation*}
            \mu(E_{\ell, a_1, a_2}) < \eta \quad \text{ for all }  a_1, a_2 \in \cA \quad \text{s.t } a_1 \neq a_2
        \end{equation*}

    \end{enumerate}
\label{thm:reconcile}
\end{theorem}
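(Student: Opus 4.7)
My plan is to use the Brier score as a potential function that decreases monotonically with every outer iteration of \Cref{alg: reconcile}. Each such iteration consists of the mean-residual patch in lines 5--6 followed by the internal Decision-Calibration sub-procedure in line 7, both supported on the disagreement event $E^t = E_{\ell^t, a_1^t, a_2^t}$. I will show that (i) the patch step alone reduces the chosen model's Brier score by at least $\alpha^2 \eta /(4d)$, and (ii) the Decision-Calibration subroutine only further reduces it. Parts 1, 2, and 4 will then follow from this monotone decrease together with the while-loop termination condition. Part 3 will require a separate per-iteration argument that leverages the $\beta$-decision-calibration guarantee of \Cref{lem: cal_loss_est}.

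\textbf{Parts 1, 2, and 4.} The key technical step is a lower bound $\|\phi^t\|_2 > \alpha/(2\sqrt{d})$. By \Cref{def:disagree_event}, on $E^t$ we have $\pi_\ell^\BR(f_1^t(x))=a_1^t$ and $\pi_\ell^\BR(f_2^t(x))=a_2^t$, and the margin condition forces $\langle f_1^t(x) - f_2^t(x), \ell_{a_2^t} - \ell_{a_1^t}\rangle > \alpha$ pointwise on $E^t$. Taking conditional expectation over $E^t$ and applying the triangle inequality around the true loss gap $\E[\langle y, \ell_{a_2^t}-\ell_{a_1^t}\rangle \mid E^t]$, at least one of the two predictors must have its conditional loss-gap estimate differing from the truth by more than $\alpha/2$; the argmax in line 4 selects $i^t$ to be precisely that model. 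Cauchy--Schwarz with $\|\ell_{a_2^t}-\ell_{a_1^t}\|_2 \leq \sqrt{d}$ then yields $\|\phi^t\|_2 > \alpha/(2\sqrt{d})$. A direct expansion of $\|f_{i^t}^t(x) + \phi^t E^t(x) - y\|_2^2$ gives the potential identity
\begin{equation*}
    B(f_{i^t}^t, \cD) - B(f_{i^t}^t + \phi^t E^t, \cD) = \|\phi^t\|_2^2 \, \mu(E^t) \geq \frac{\alpha^2 \eta}{4d},
\end{equation*}
and the subsequent projection onto $[0,1]^d$ only helps since $y \in [0,1]^d$. The Decision-Calibration call in line 7 is itself a loop of mean-residual patches on events $E_{\ell, a}(f(x),x) \cap E^t$, each of which obeys the same potential identity and so weakly reduces the Brier score further. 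Summing per-model and using $B \geq 0$ gives $T_i \leq 4d\, B(f_i, \cD)/(\alpha^2 \eta)$, which yields Part 1 (after summing $T_1 + T_2$) and Part 2 (by telescoping the per-iteration decreases for each model separately). Part 4 is exactly the negation of the while-loop condition at termination.

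\textbf{Part 3 and main obstacle.} Fix $\ell \in \cL$ and analyze one outer iteration updating model $i$. Since every patch, both the one in line 6 and all sub-patches inside Decision-Calibration, is multiplied by the indicator of $E^t$, the prediction $f_i^{t+1}$ agrees with $f_i^t$ off $E^t$, so the policy and its expected loss are unchanged outside $E^t$. On $E^t$ the old policy is the constant action $a^* \in \{a_1^t, a_2^t\}$ indexed by $i$, and $f_i^{t+1}$ is $\beta$-decision-calibrated with respect to $\{E_{\ell, a}(f_i^{t+1}(x), x) \cap E^t\}_{a \in [K]}$. I will apply \Cref{lem: cal_loss_est} twice: first to replace $\E[\ell(y, \pi_\ell^\BR(f_i^{t+1}(x))) E^t(x)]$ by $\E[\langle f_i^{t+1}(x), \ell_{\pi_\ell^\BR(f_i^{t+1}(x))}\rangle E^t(x)]$ at a cost of at most $\beta\sqrt{d}$ per best-response event; then use best-response optimality to upper-bound this by $\E[\langle f_i^{t+1}(x), \ell_{a^*}\rangle E^t(x)]$; and finally invoke \Cref{lem: cal_loss_est} again (with $a' = a^*$, summed over the $K$ new best-response events inside $E^t$) to revert to $\E[\ell(y, a^*) E^t(x)] = \E[\ell(y, \pi_\ell^\BR(f_i^t(x))) E^t(x)]$. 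Telescoping the resulting $O(K\beta\sqrt{d})$ per-iteration bound over the $T_i$ outer updates to model $i$ yields the claim. The main obstacle will be this Part 3 step: the decision-calibration guarantee applies to the new best-response events of $f_i^{t+1}$ rather than those of $f_i^t$, so care is needed to route the comparison through the constant old policy $a^*$ on $E^t$ and to exploit that the new best-response events partition $E^t$ into at most $K$ pieces where \Cref{lem: cal_loss_est} can be applied uniformly.
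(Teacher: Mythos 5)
Your proof follows essentially the same route as the paper's: Parts 1, 2, and 4 come from the same potential argument that combines the lower bound $\|\phi^t\|_2 > \alpha/(2\sqrt{d})$ (the paper's \Cref{lem: brier_gap}, derived exactly as you do from the margin condition, the triangle inequality around $f^*$, and Cauchy--Schwarz) with the Brier-score drop $\|\phi\|_2^2\,\mu(E)$ per patch (\Cref{lem: brier_after_patching}), and Part 3 uses the same partition of $E^t$ into the new best-response events of $f_i^{t+1}$ routed through the constant old action (\Cref{lem: loss_each_round}). The one cosmetic difference is that invoking the calibration guarantee twice per event---once for $\ell_{\pi_\ell^\BR(f_i^{t+1}(x))}$ and once for $\ell_{a^*}$---yields $2K\beta\sqrt{d}$ per time-step, whereas the paper applies it a single time to the difference vector $\ell_a - \ell_{a_i^t}$ (whose coordinates still lie in $[-1,1]$, so its $2$-norm is at most $\sqrt{d}$) and thereby obtains the stated $K\beta\sqrt{d}$.
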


\begin{remark}
    Note that in the third result of \Cref{thm:reconcile}, the increase in downstream decision-making loss at each time-step only depends on the decision-calibrate tolerance $\beta$, dimension $d$, and number of actions $K$. Since the total number of time-steps in \Cref{thm:reconcile} does not depend on $\beta$, we can set $\beta = \frac{\alpha}{T \sqrt{d} K}$ to ensure the loss of taking the best-response action does not degrade by more than $\alpha$. Moreover, in our empirical experiments (\Cref{sec:experiment}), we observe that the loss of taking the best-response action only increases minimally. 
\end{remark}
\subsection{Finite Sample Analysis}
\label{sec:finite-sample}

In \Cref{sec: reconcile_distribution}, we have presented an algorithm, ReDCal, to reconcile two predictors assuming the decision-makers have direct access to the probability distribution $\cD$. However, in practice, the decision-makers will only have access to a dataset $D = \{(x_1, y_1), \cdots, (x_n, y_n)\}$ containing $n$ i.i.d samples drawn from $\cD$. In this section, we will instead run \Cref{alg: reconcile} on the empirical distribution over $D$ and show that its guarantees can translate to the underlying distribution $\cD$ with high probability. To prevent data leakage, it is important to assume that the dataset $D$ is drawn independently of the predictors $f_1$ and $f_2$, \ie the dataset contains freshly drawn data that was not used to train either of the predictors that we want to reconcile. For the formal proofs, see \Cref{appendix:finite_sample_proofs}. 

At a high level, since the samples in $D$ are independently and identically distributed, we can apply Chernoff-Hoeffding inequality to show that, with high probability, the in-sample quantities are approximately equal to out-sample quantities. We summarize the results in the theorem below.

\begin{theorem} \label{thm: sample_reconcile}
    Fix any distribution $\cD$ and dataset $D \sim \cD$ containing $n$ samples drawn i.i.d from $\cD$. For any pair of predictors $f_1, f_2: \cX \rightarrow [0,1]^d$, family of loss functions $\cL$, loss margin $\alpha > 0$, disagreement region mass $\eta > 0$, and decision-calibration tolerance $\beta > 0$, \Cref{alg: reconcile} run over the empirical distribution $D$ updates $f_1$ and $f_2$ for $T_1$ and $T_2$ time-steps, respectively, and outputs a pair of predictors $(f_1^{T}, f_2^{T})$ such that, with probability at least $1-\delta$ over the randomness of $D \sim \cD^n$,  
    \begin{enumerate}
        \item The total number of time-steps for \Cref{alg: reconcile} and \Cref{alg: decision_cali} is 
        \begin{equation*}
            T = T_1 + T_2 \leq \frac{2d}{\min\{\beta^2, \eta \alpha^2/4d\}}
        \end{equation*}
        \item For $i \in \{1,2\}$, the Brier scores of the final models are lower than that of the input models:
        \begin{align*}
             B(f_i^{T_i}, \cD) \leq B(f_i, \cD) - (T_i / 2) \cdot \min\left\{\beta^2, \eta \alpha^2 / (4d)\right\}
        \end{align*}
        \item For $i \in \{1,2\}$ and for all $\ell \in \cL$, the downstream decision-making losses of the final models do not increase by much compared to that of the input models: 
        \begin{align*}
            &\E_{(x,y) \sim \cD}[\ell(y, \pi_\ell^\BR(f_i^T(x)) - \ell(y, \pi_\ell^\BR(f_i(x))] 
            \leq 2T_i \beta \sqrt{d} K
        \end{align*}
        \item The final models approximately agree on their best-response actions almost everywhere. That is, the disagreement region $E_{\ell, a_1, a_2}$ calculated using $f_1^T, f_2^T$ has small mass: $\forall \ell \in \cL$,
        \begin{equation*}
            \mu(E_{\ell, a_1, a_2}) \leq 2\eta \quad \text{ for all }  a_1, a_2 \in \cA \quad \text{s.t } a_1 \neq a_2
        \end{equation*}
    \end{enumerate}
    if $n \geq \Omega\left(d/ (\eta^2 \min\{\beta, \eta \alpha^2/d\}) \cdot \left( \ln (K) + \ln(\abs{\cL}) + d \ln \left(d/\min\{\beta, \eta \alpha^2/d\}\right) + \ln (1/\delta)\right)\right). $
\end{theorem}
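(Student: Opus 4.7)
My plan is to reduce the finite-sample theorem to the population-level Theorem 3.3 via a uniform-convergence argument: I will show that every empirical quantity the algorithm evaluates on $D$ is close to its population counterpart with high probability, and then each of the four guarantees of Theorem 3.3 transfers to $\cD$ at the cost of a constant-factor degradation in the parameters.

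First I would enumerate the empirical quantities the algorithm reads from $D$ during its execution:
(i) the empirical disagreement masses $\mu_D(E_{\ell, a_1, a_2})$ used in the while-loop test of Algorithm 2;
(ii) the empirical conditional loss gaps used to select $i^t$;
(iii) the empirical patch magnitudes $\phi^t$ used to update $f_i^t$ in Algorithms 1 and 2;
(iv) the empirical multi-calibration violations that drive the while-loop of Algorithm 1.
Each is a bounded sample average of quantities in $[-1,1]^d$, so Chernoff--Hoeffding applied coordinate-wise gives concentration at scale $\tau$ from roughly $O(d \log(1/\delta)/\tau^2)$ samples unconditionally, or from $O(d \log(1/\delta)/(\eta \tau^2))$ samples when conditioning on an event of mass at least $\eta$.

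The principal obstacle is that both the events and the predictors in (i)--(iv) are data-dependent, so pointwise Hoeffding is inadequate. To resolve this I would build a covering net for the set of predictors the algorithm can reach: since every iterate is obtained by a sequence of patches that add a constant vector to the current predictor on a specific event and then project into $[0,1]^d$, the range of any iterate lies in a discretization of $[0,1]^d$ of resolution $\tau = \Theta(\min\{\beta, \eta \alpha^2/d\})$ and size $(1/\tau)^d$. A union bound over this cover, the $|\cL|$ loss functions, and the $K^2$ action pairs --- combined with the conditional Hoeffding bound above --- yields uniform $\tau$-closeness of all four empirical quantities to their population counterparts with probability $1-\delta$, provided $n$ satisfies the stated lower bound: the $d \ln(d/\min\{\beta, \eta\alpha^2/d\})$ term comes from $d \log(1/\tau)$, the $\ln K + \ln |\cL|$ terms from the action and loss-family count, and the $1/(\eta^2 \min\{\beta, \eta\alpha^2/d\})$ factor from combining the conditional Hoeffding rate with a Bernstein-style refinement that exploits the small variance of indicators of events whose mass is at least $\eta$.

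Once uniform convergence is in hand, each of the four conclusions follows by running the proof of Theorem 3.3 on the empirical distribution $D$ and then transferring. The time-step and Brier-score bounds hold because any in-sample Brier-score drop of at least $\min\{\beta^2, \eta \alpha^2/(4d)\}$ is, up to $O(\tau)$ sampling noise, a true drop of at least half as much, which is exactly what the factor-of-two loss in the stated iteration bound reflects. The termination criterion $\mu_D(E_{\ell, a_1, a_2}) < \eta$ combined with uniform convergence gives $\mu_\cD(E_{\ell, a_1, a_2}) < 2\eta$, yielding the disagreement guarantee. Empirical $\beta$-decision-calibration implies population $2\beta$-decision-calibration, which, combined with Lemma 2.5, yields the downstream-loss bound $2T_i \beta \sqrt{d} K$. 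The delicate part throughout is choosing $\tau$ small enough that all four guarantees survive simultaneously --- preserving the Brier-score drop, the $\eta$ termination threshold, and the $\beta$-decision-calibration tolerance --- which is precisely what forces $\min\{\beta, \eta \alpha^2/d\}$ into the denominator of $n$.
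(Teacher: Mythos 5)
Your proposal matches the paper's proof in all essentials: the paper likewise bounds the number of reachable predictors by a finite set $S$ (of size roughly $(|\cL|^2 K^3 m^d)^{T_{\max}}$), union-bounds Hoeffding concentration of the Brier score, calibration error, and disagreement mass over $S$, $\cL$, and the action pairs, and then transfers each population guarantee with the same factor-of-two degradations you describe. The one point you should make explicit is that the iterates do not automatically lie on a finite grid --- the paper modifies the algorithm to round each patch $\phi^t$ to $[1/m]^d$ and then re-verifies (its Lemma~\ref{lem: grid_brier_after_patching}) that the per-step Brier improvement survives the rounding error $d/(4m^2)$, which is where half of the factor-of-two loss actually comes from.
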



%
\section{Experiments}
\label{sec:experiment}
In this section, we complement our theoretical results with a set of experiments on real-world datasets to show our improvement in decreasing decision-making loss compared to prior work in calibration. 

\subsection{Imagenet Multi-class Classification}

\paragraph{Experiment Setup.} 
We use the ImageNet dataset \citep{ImageNet} and two pre-trained models provided by pyTorch (inception-v3 \citep{szegedy2015rethinking} and resnet50 \citep{he2015deep}). Among the $50000$ validation samples, we use $40000$ samples for calibration and $10000$ samples for testing. 

We investigate how the downstream decision loss changes with the four calibration algorithms: Reconcile (\Cref{alg: reconcile-brier}), Decision-Calibration (\Cref{alg: decision_cali}), ReDCal (\Cref{alg: reconcile}), and the combination of running ReDCal after Decision Calibration as post-process.
We run each calibration algorithm $500$ times. For each run, we 
first randomly draw $100$ classes from the $1000$ classes of ImageNet. Then, we randomly generate a loss function such that, for each $y \in \cY$, $a \in \cA$,  $\ell(y,a) \sim \normal(0,1)$. For each randomly generated loss function $\ell$, we compare the expected losses derived from the best-response policies based on predictors $f_1$ and $f_2$ against those based on the optimal predictor $f^*$. Formally, the loss gap at timestep $t$ using predictor $f_i$ is defined as 
\[\text{LossGap}(f_i^t) = \E_{(x,y) \sim \cD}[\ell(y, \pi_\ell^\BR(f_i^t(x))) - \ell(y, \pi_\ell^\BR(f^*(x)))].\]

The hyperparameters are chosen as follows: loss margin $\alpha = 0.001$,  disagreement region mass $\eta = 0.01$, decision-calibration tolerance $\beta = 0.00001$, and the number of actions $K=10$.

\paragraph{Results.} We compare the performance of ReDCal with the two baseline algorithms in terms of Brier scores and decision loss reduction. Furthermore, in \Cref{fig: imagenet_compare_classes}, we provide a comparison of the calibration algorithms' performance when the number of dimensions increases.

\paragraph{Brier score.} In \Cref{fig: imagenet_brier-score}, we compare the Brier score of ReDCal (\Cref{alg: reconcile}) with the two baseline algorithms on both the calibration and the test datasets. Compared to Reconcile, our algorithm decreases the Brier score by a smaller amount on the test dataset. The combined algorithm of Decision-Calibration with ReDCal as post-process achieves the most substantial decrease in the Brier score.

\begin{figure}[ht]
    \includegraphics[width = \textwidth]{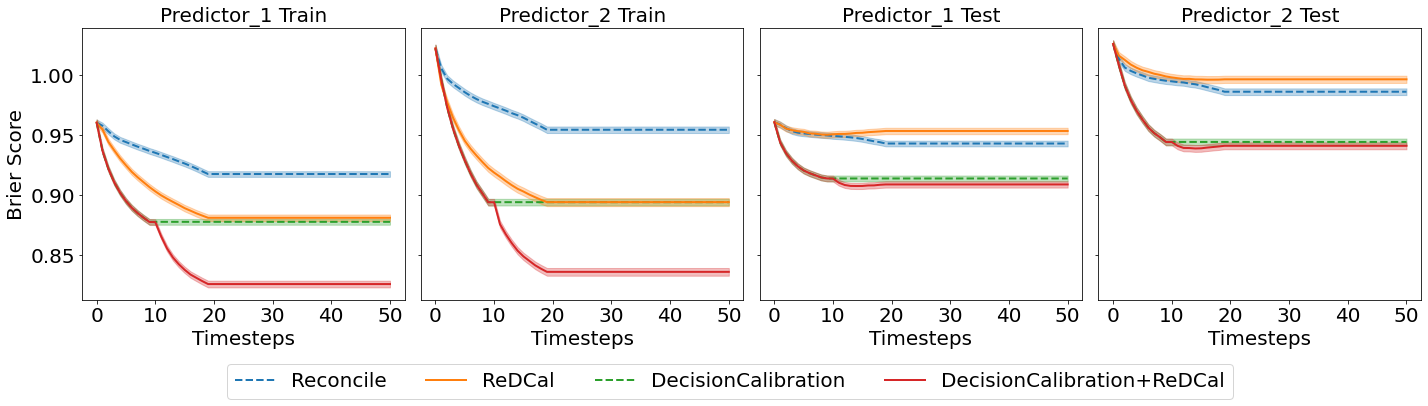}
    \caption{ReDCal decreases Brier score on Imagenet. Compared to Reconcile, our algorithm decreases the Brier score by a smaller amount on the test dataset. Decision-Calibration with ReDCal as post-process achieves the most substantial decrease in the Brier score. }
    \label{fig: imagenet_brier-score}
\end{figure}

\paragraph{Decision loss on calibration dataset.}
In \Cref{fig:imagenet_experiments}, we compare the decision gap of our proposed algorithm with the two baseline algorithms on the training dataset.
Compared to Reconcile, ReDCal converges within a similar number of time-step and decreases the loss by a larger amount on the test dataset. Moreover, ReDCal further decrease the loss when used as a post-process after Decision-Calibration terminates. 
\begin{figure}[ht]
     \centering     
     \begin{subfigure}[b]{\textwidth}
         \centering
         \includegraphics[width = \textwidth]{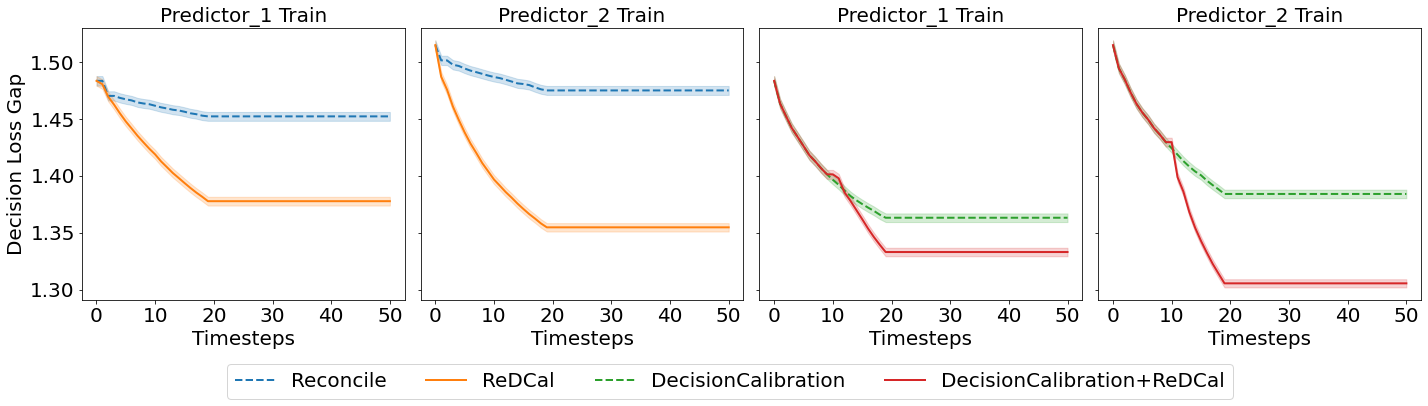}
         \caption{ReDCal decreases the decision loss on the validation partition of the ImageNet dataset. }
         \label{fig:imagenet_train_loss}
     \end{subfigure}
     \vfill
     \begin{subfigure}[b]{\textwidth}
         \centering
         \includegraphics[width = \textwidth]{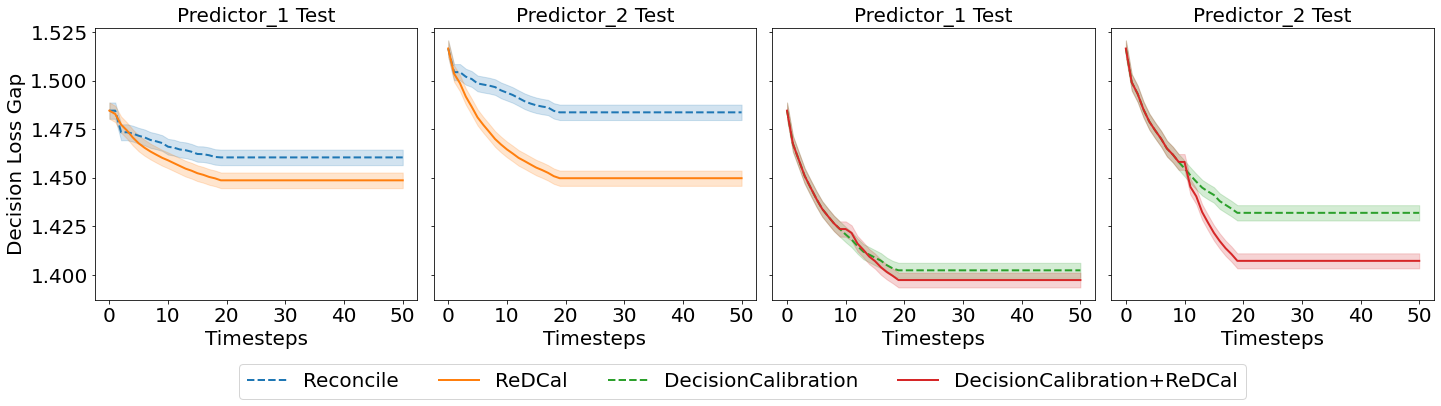}
         \caption{ReDCal decreases the decision loss on the test partition of the ImageNet dataset. }
         \label{fig:imagenet_test_loss}
     \end{subfigure}
        \caption{In \Cref{fig:imagenet_train_loss} and \Cref{fig:imagenet_test_loss}, we plot the gap between optimal loss had we know the true label $y$ and the loss from taking best-response actions induced by the calibrated predictors on the validation set and test set, respectively. In the left two figures, we compare \Cref{alg: decision_cali} (orange) with \Cref{alg: reconcile-brier} (blue). While the average loss of predictors updated using \Cref{alg: reconcile-brier} may increase on the test set, our algorithm quickly converges and produces predictors with lower decison-making loss. In the right two figures, we compare \Cref{alg: decision_cali} (green) to \Cref{alg: decision_cali} with an additional run of \Cref{alg: reconcile} (red) as post-process. We observe that running our algorithm as post-process can still further decrease the loss compared to just running \Cref{alg: decision_cali} on its own. Results are averaged over $10$ runs and the shaded region indicates $\pm 1$ standard errors.}
        \label{fig:imagenet_experiments}
\end{figure}
\paragraph{Decision loss comparison for high-dimensional classification problem.} In \Cref{fig: imagenet_compare_classes}, we compare the decision loss gap of our proposed algorithm with the two baseline algorithms on the testing dataset, using $d = 10, 100$, and $1000$ classes. We plot the average loss gap of the two predictors. 
The hyperparameters are: disagreement margin $\alpha = 0.1/d$, decision-calibration tolerance $\beta = 0.001/d$, disagreement region mass $\eta = 0.01$, number of actions $K = 10$.

\begin{figure}[ht]
    \centering
    \includegraphics[scale = 0.28]{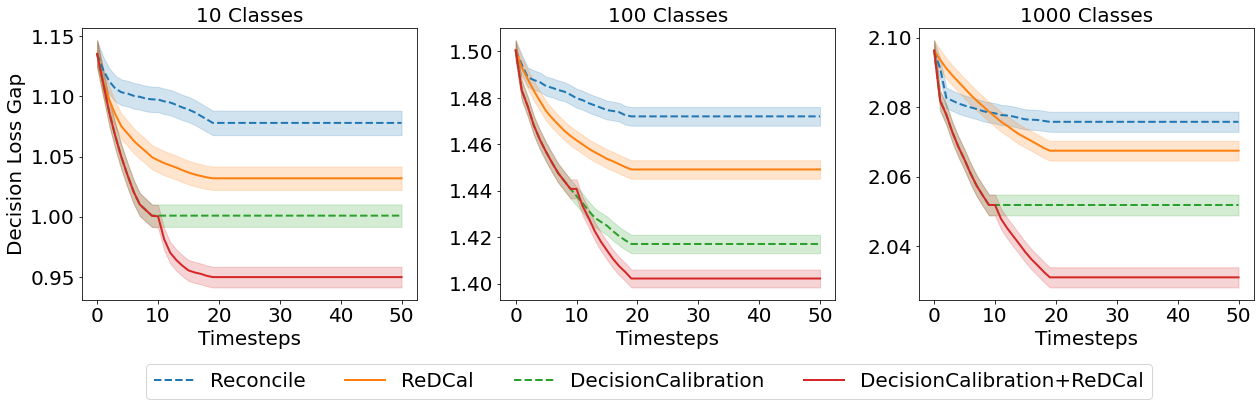}
    \caption{ReDCal decreases decision loss on Imagenet. The takeaway results are similar to \Cref{fig:imagenet_test_loss}. As the number of classes in the multi-class classification problem grows from $10$ to $1000$, ReDCal still outperforms Reconcile in decreasing decision loss on the test dataset. When we have $1000$ classes, ReDCal converges slower than Reconcile. Furthermore, ReDCal can further decrease the decision loss when it is used as a post-process after Decision Calibration terminates.}
    \label{fig: imagenet_compare_classes}
\end{figure}

\subsection{HAM10000 Multi-class Classification}
\paragraph{Experiment Setup.} 
We use the HAM10000 dataset \citep{tschandl2018ham} (licensed CC BY-NC $4.0$) on pigmented skin lesions to predict the probability that a patient has contracted one of $7$ possible skin diseases: 'akiec', 'bcc', 'bkl', 'df', 'nv', 'vasc', and 'mel'. We split the dataset into train/validation/test sets, with $20 \%$ of the data are used for validation and $20 \%$ are used for testing. We use the train set to train two neural networks using pyTorch with resnet50 \citep{he2015deep} and densenet121 \citep{huang2018densely} architectures and learn two models with around $88\%$ top-$1$ accuracy. From each model, we output the individual probability prediction for each of the $7$ possible labels. We use the validation set to calibrate the predictors using our proposed algorithm and the two baseline algorithms, and the test set to measure the final performance.   

We run each calibration algorithm $10$ times. At each run, we draw a fresh loss function created based on the loss function motivated by medical domain knowledge in \citet{zhao2021calibrating} and additional random noise drawn from $\normal(0,1)$. There are two possible actions for the decision-maker: treatment ($a = [1,0]$) or no treatment ($a = [0,1]$). Given a loss function $\ell$ and a predictor $f$, the decision-maker will choose an action that minimizes their loss. 

For each calibration algorithm, we calculate (1) the Brier score of the updated predictors and (2) the differences between the optimal loss had we known $y$ and the actual loss from taking the best-response actions induced by each predictor.  

The hyperparameters for \Cref{alg: reconcile} are chosen as follows: loss margin $\alpha = 0.1$, target disagreement region mass $\eta = 0.01$, and decision-calibration tolerance $\beta = 0.000001$. 

\paragraph{Results.} Similar to the experiment on the ImageNet dataset, we observe lower decision loss after running ReDCal compared to the baseline algorithm Reconcile. Moreover, we compare the performance of running Decision-Calibration on its own and using ReDCal as a post-process. 

\paragraph{Brier score.} In \Cref{fig:HAM-brier-score}, we compare the Brier score of ReDCal (\Cref{alg: reconcile}) with the two baseline algorithms on both the calibration and the test datasets. Compared to Reconcile, our proposed algorithm decreases the Brier score by a smaller amount.
\begin{figure}[ht]
     \centering
         \includegraphics[width=\textwidth]{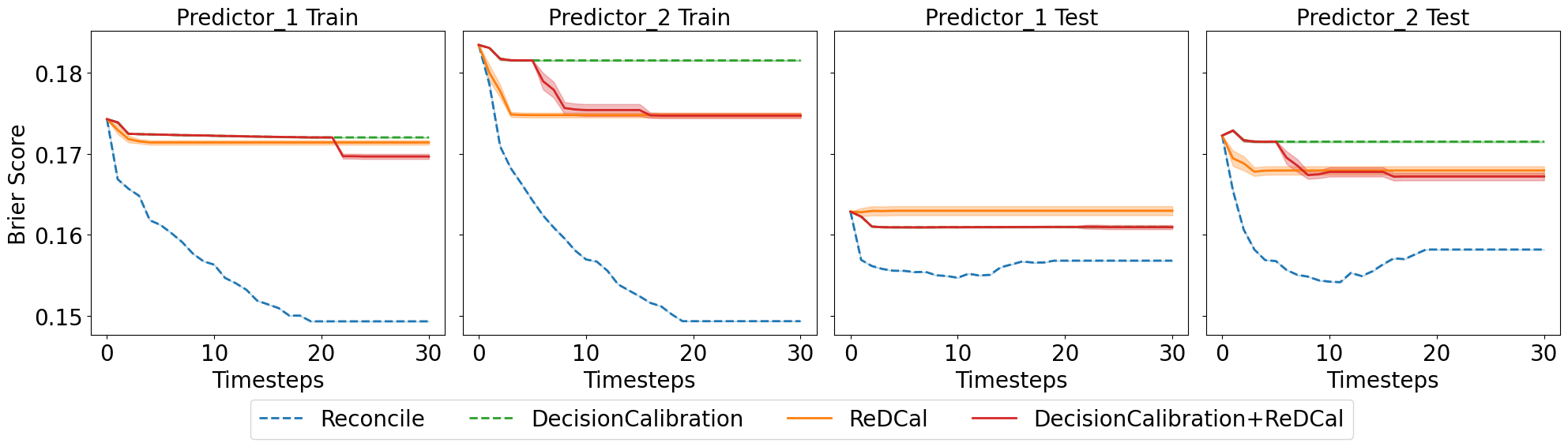}
         \caption{Brier score of the updated predictors using \Cref{alg: reconcile} (orange) and two benchmark algorithms: \Cref{alg: reconcile-brier} (dashed-blue) and \Cref{alg: decision_cali} (dashed-green). Our algorithm reduces the Brier score by a smaller amount compared to \Cref{alg: reconcile}. Results are averaged over $10$ runs and the shaded region indicates $\pm 1$ standard error.}
         \label{fig:HAM-brier-score}
\end{figure}
\paragraph{Decision loss.} In \Cref{fig:HAM_experiments}, we compare the decision loss gap of our proposed algorithm with the two baseline calibration algorithms. Compared to Reconcile, our algorithm decreases the decision loss by a larger amount on the test dataset. Furthermore, while Decision-Calibration already decreases the decision loss, our algorithm can further improve upon their result when it is used as a post-process after Decision-Calibration terminates. 
\begin{figure}[ht]
     \centering     
     \begin{subfigure}[b]{\textwidth}
         \centering
         \includegraphics[width=\textwidth]{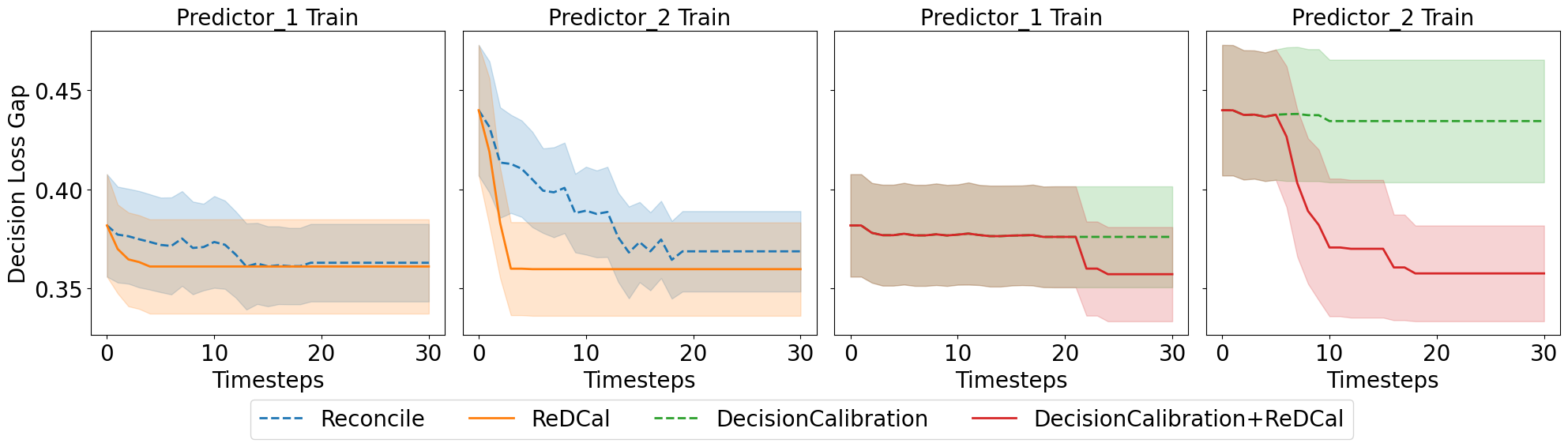}
         \caption{ReDCal decreases the decision loss on the validation partition of HAM10000 dataset. }
         \label{fig:HAM_train_loss}
     \end{subfigure}
     \vfill
     \begin{subfigure}[b]{\textwidth}
         \centering
         \includegraphics[width=\textwidth]{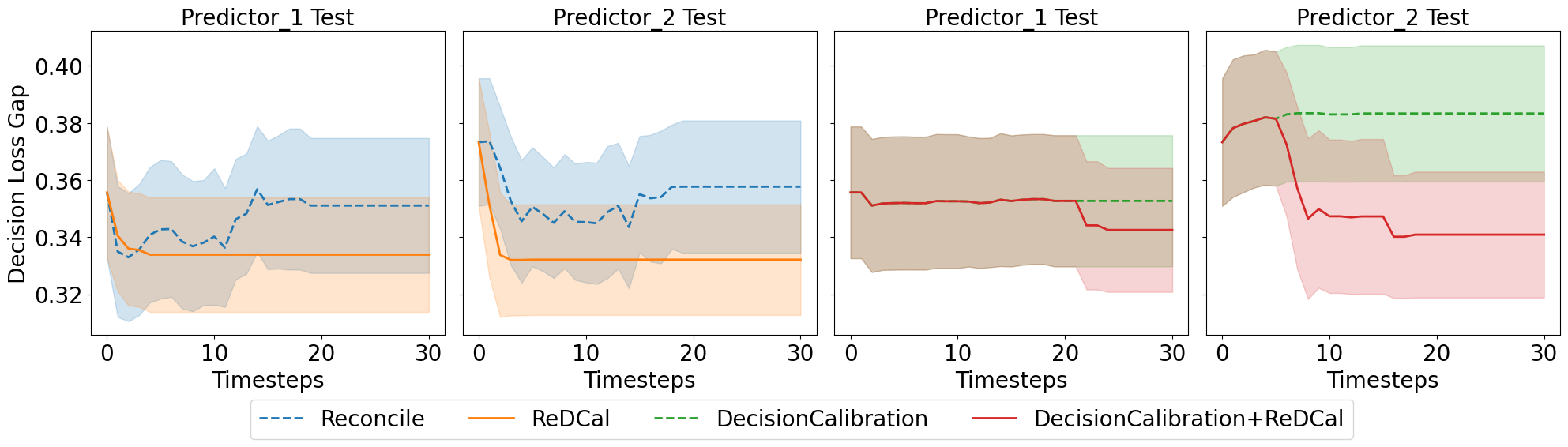}
         \caption{ReDCal decreases the decision loss on the test partition of HAM10000 dataset. }
         \label{fig:HAM_test_loss}
     \end{subfigure}
        \caption{In \Cref{fig:HAM_train_loss} and \Cref{fig:HAM_test_loss}, we plot the gap between optimal loss had we know the true label $y$ and the loss from taking best-response actions induced by the calibrated predictors on the validation set and test set, respectively. In the left two figures, we compare \Cref{alg: decision_cali} (orange) with \Cref{alg: reconcile-brier} (blue). While the average loss of predictors updated using \Cref{alg: reconcile-brier} may increase on the test set, our algorithm quickly converges and produces predictors with lower decison-making loss. In the right two figures, we compare \Cref{alg: decision_cali} (green) to \Cref{alg: decision_cali} with an additional run of \Cref{alg: reconcile} (red) as post-process. We observe that running our algorithm as post-process can still further decrease the loss compared to just running \Cref{alg: decision_cali} on its own. Results are averaged over $10$ runs and the shaded region indicates $\pm 1$ standard errors.}
        \label{fig:HAM_experiments}
\end{figure}

    
%
\section{Conclusion}
\label{sec:conclusion}
Predictive multiplicity is a phenomenon in machine learning where the decision-makers have two predictors with nearly equivalent accuracy but vastly different individual predictions. We propose an algorithm, ReDCal, that updates the pair of predictors using either true distribution or an i.i.d validation set until they approximately agree almost everywhere on (1) individual predictions, (2) best-response actions in the downstream decision-making task, and (3) following the best-response actions incur losses that are close to the optimal loss. This result helps alleviate the problem of predictive multiplicity in model selection. Finally, we provide experiments using real-world datasets to show that our proposed algorithm achieves lower decision loss compared to existing work. While we do not provide examples of domain-specific loss functions as part of our analysis and experiments, we hope that our findings can aid future studies on the impact of model multiplicity in decision-making.  

\newpage 
\bibliographystyle{plainnat}
\bibliography{refs}

\begin{thebibliography}{30}
\providecommand{\natexlab}[1]{#1}
\providecommand{\url}[1]{\texttt{#1}}
\expandafter\ifx\csname urlstyle\endcsname\relax
  \providecommand{\doi}[1]{doi: #1}\else
  \providecommand{\doi}{doi: \begingroup \urlstyle{rm}\Url}\fi

\bibitem[Al-Najjar and Weinstein(2008)]{nabil2008comparative}
Nabil~I. Al-Najjar and Jonathan Weinstein.
\newblock Comparative testing of experts.
\newblock \emph{Econometrica}, 76\penalty0 (3):\penalty0 541--559, 2008.
\newblock ISSN 00129682, 14680262.
\newblock URL \url{http://www.jstor.org/stable/40056456}.

\bibitem[Black et~al.(2022)Black, Raghavan, and Barocas]{black2022model}
Emily Black, Manish Raghavan, and Solon Barocas.
\newblock Model multiplicity: Opportunities, concerns, and solutions.
\newblock In \emph{Proceedings of the 2022 ACM Conference on Fairness,
  Accountability, and Transparency}, FAccT '22, page 850–863, New York, NY,
  USA, 2022. Association for Computing Machinery.
\newblock ISBN 9781450393522.
\newblock \doi{10.1145/3531146.3533149}.
\newblock URL \url{https://doi.org/10.1145/3531146.3533149}.

\bibitem[Bloom et~al.(1997)Bloom, Orr, Bell, Cave, Doolittle, Lin, and
  Bos]{bloom1997benefits}
Howard~S. Bloom, Larry~L. Orr, Stephen~H. Bell, George Cave, Fred Doolittle,
  Winston Lin, and Johannes~M. Bos.
\newblock The benefits and costs of jtpa title ii-a programs: Key findings from
  the national job training partnership act study.
\newblock \emph{The Journal of Human Resources}, 32\penalty0 (3):\penalty0
  549--576, 1997.
\newblock ISSN 0022166X.
\newblock URL \url{http://www.jstor.org/stable/146183}.

\bibitem[Breiman(2001)]{breiman2001statistical}
Leo Breiman.
\newblock Statistical modeling: the two cultures.
\newblock \emph{Statist. Sci.}, 16\penalty0 (3):\penalty0 199--231, 2001.
\newblock ISSN 0883-4237.
\newblock \doi{10.1214/ss/1009213726}.
\newblock URL \url{http://dx.doi.org/10.1214/ss/1009213726}.
\newblock With comments and a rejoinder by the author.

\bibitem[Chen et~al.(2018)Chen, Lin, Rudin, Shaposhnik, Wang, and
  Wang]{chen2018interpretable}
Chaofan Chen, Kangcheng Lin, Cynthia Rudin, Yaron Shaposhnik, Sijia Wang, and
  Tong Wang.
\newblock An interpretable model with globally consistent explanations for
  credit risk, 2018.

\bibitem[D'Amour et~al.(2022)D'Amour, Heller, Moldovan, Adlam, Alipanahi,
  Beutel, Chen, Deaton, Eisenstein, Hoffman, et~al.]{d2022underspecification}
Alexander D'Amour, Katherine Heller, Dan Moldovan, Ben Adlam, Babak Alipanahi,
  Alex Beutel, Christina Chen, Jonathan Deaton, Jacob Eisenstein, Matthew~D
  Hoffman, et~al.
\newblock Underspecification presents challenges for credibility in modern
  machine learning.
\newblock \emph{Journal of Machine Learning Research}, 23\penalty0
  (226):\penalty0 1--61, 2022.

\bibitem[Deng et~al.(2009)Deng, Dong, Socher, Li, Li, and Fei-Fei]{ImageNet}
Jia Deng, Wei Dong, Richard Socher, Li-Jia Li, Kai Li, and Li~Fei-Fei.
\newblock Imagenet: A large-scale hierarchical image database.
\newblock In \emph{2009 IEEE Conference on Computer Vision and Pattern
  Recognition}, pages 248--255, 2009.
\newblock \doi{10.1109/CVPR.2009.5206848}.

\bibitem[Deng et~al.(2023)Deng, Dwork, and Zhang]{deng2023happymap}
Zhun Deng, Cynthia Dwork, and Linjun Zhang.
\newblock Happymap: A generalized multi-calibration method, 2023.

\bibitem[Dwork et~al.(2019)Dwork, Kim, Reingold, Rothblum, and
  Yona]{dwork2019learning}
Cynthia Dwork, Michael~P. Kim, Omer Reingold, Guy~N. Rothblum, and Gal Yona.
\newblock Learning from outcomes: Evidence-based rankings.
\newblock In \emph{2019 IEEE 60th Annual Symposium on Foundations of Computer
  Science (FOCS)}, pages 106--125, 2019.
\newblock \doi{10.1109/FOCS.2019.00016}.

\bibitem[Dwork et~al.(2021)Dwork, Kim, Reingold, Rothblum, and
  Yona]{dwork2021outcome}
Cynthia Dwork, Michael~P. Kim, Omer Reingold, Guy~N. Rothblum, and Gal Yona.
\newblock Outcome indistinguishability.
\newblock In \emph{Proceedings of the 53rd Annual ACM SIGACT Symposium on
  Theory of Computing}, STOC 2021, page 1095–1108, New York, NY, USA, 2021.
  Association for Computing Machinery.
\newblock ISBN 9781450380539.
\newblock \doi{10.1145/3406325.3451064}.
\newblock URL \url{https://doi.org/10.1145/3406325.3451064}.

\bibitem[Feinberg and Stewart(2008)]{feinberg2008testing}
Yossi Feinberg and Colin Stewart.
\newblock Testing multiple forecasters.
\newblock \emph{Econometrica}, 76\penalty0 (3):\penalty0 561--582, 2008.
\newblock ISSN 00129682, 14680262.
\newblock URL \url{http://www.jstor.org/stable/40056457}.

\bibitem[Garg et~al.(2019)Garg, Kim, and Reingold]{garg2019tracking}
Sumegha Garg, Michael~P. Kim, and Omer Reingold.
\newblock Tracking and improving information in the service of fairness.
\newblock In \emph{Proceedings of the 2019 ACM Conference on Economics and
  Computation}, EC '19, page 809–824, New York, NY, USA, 2019. Association
  for Computing Machinery.
\newblock ISBN 9781450367929.
\newblock \doi{10.1145/3328526.3329624}.
\newblock URL \url{https://doi.org/10.1145/3328526.3329624}.

\bibitem[Globus-Harris et~al.(2022)Globus-Harris, Kearns, and
  Roth]{globusharris2022algorithmic}
Ira Globus-Harris, Michael Kearns, and Aaron Roth.
\newblock An algorithmic framework for bias bounties.
\newblock In \emph{Proceedings of the 2022 ACM Conference on Fairness,
  Accountability, and Transparency}, FAccT '22, page 1106–1124, New York, NY,
  USA, 2022. Association for Computing Machinery.
\newblock ISBN 9781450393522.
\newblock \doi{10.1145/3531146.3533172}.
\newblock URL \url{https://doi.org/10.1145/3531146.3533172}.

\bibitem[Globus-Harris et~al.(2024)Globus-Harris, Gupta, Kearns, and
  Roth]{globusharris2024model}
Ira Globus-Harris, Varun Gupta, Michael Kearns, and Aaron Roth.
\newblock Model ensembling for constrained optimization, 2024.

\bibitem[Haghtalab et~al.(2023)Haghtalab, Jordan, and
  Zhao]{haghtalab2023unifying}
Nika Haghtalab, Michael~I. Jordan, and Eric Zhao.
\newblock A unifying perspective on multi-calibration: Game dynamics for
  multi-objective learning, 2023.

\bibitem[He et~al.(2015)He, Zhang, Ren, and Sun]{he2015deep}
Kaiming He, Xiangyu Zhang, Shaoqing Ren, and Jian Sun.
\newblock Deep residual learning for image recognition, 2015.

\bibitem[Hebert-Johnson et~al.(2018)Hebert-Johnson, Kim, Reingold, and
  Rothblum]{hebert2018multicalibration}
Ursula Hebert-Johnson, Michael Kim, Omer Reingold, and Guy Rothblum.
\newblock Multicalibration: Calibration for the
  ({C}omputationally-identifiable) masses.
\newblock In Jennifer Dy and Andreas Krause, editors, \emph{Proceedings of the
  35th International Conference on Machine Learning}, volume~80 of
  \emph{Proceedings of Machine Learning Research}, pages 1939--1948. PMLR,
  10--15 Jul 2018.
\newblock URL \url{https://proceedings.mlr.press/v80/hebert-johnson18a.html}.

\bibitem[Huang et~al.(2018)Huang, Liu, van~der Maaten, and
  Weinberger]{huang2018densely}
Gao Huang, Zhuang Liu, Laurens van~der Maaten, and Kilian~Q. Weinberger.
\newblock Densely connected convolutional networks, 2018.

\bibitem[Jung et~al.(2020)Jung, Lee, Pai, Roth, and Vohra]{jung2020moment}
Christopher Jung, Changhwa Lee, Mallesh~M. Pai, Aaron Roth, and Rakesh Vohra.
\newblock Moment multicalibration for uncertainty estimation, 2020.

\bibitem[Jung et~al.(2022)Jung, Noarov, Ramalingam, and Roth]{jung2022batch}
Christopher Jung, Georgy Noarov, Ramya Ramalingam, and Aaron Roth.
\newblock Batch multivalid conformal prediction, 2022.

\bibitem[Kim et~al.(2019)Kim, Ghorbani, and Zou]{kim2019multiaccuracy}
Michael~P. Kim, Amirata Ghorbani, and James Zou.
\newblock Multiaccuracy: Black-box post-processing for fairness in
  classification.
\newblock In \emph{Proceedings of the 2019 AAAI/ACM Conference on AI, Ethics,
  and Society}, AIES '19, page 247–254, New York, NY, USA, 2019. Association
  for Computing Machinery.
\newblock ISBN 9781450363242.
\newblock \doi{10.1145/3306618.3314287}.
\newblock URL \url{https://doi.org/10.1145/3306618.3314287}.

\bibitem[Marx et~al.(2020)Marx, Calmon, and Ustun]{marx2020predictive}
Charles Marx, Flavio Calmon, and Berk Ustun.
\newblock Predictive multiplicity in classification.
\newblock In Hal~Daumé III and Aarti Singh, editors, \emph{Proceedings of the
  37th International Conference on Machine Learning}, volume 119 of
  \emph{Proceedings of Machine Learning Research}, pages 6765--6774. PMLR,
  13--18 Jul 2020.
\newblock URL \url{https://proceedings.mlr.press/v119/marx20a.html}.

\bibitem[Noarov et~al.(2023)Noarov, Ramalingam, Roth, and
  Xie]{noarov2023highdimensional}
Georgy Noarov, Ramya Ramalingam, Aaron Roth, and Stephan Xie.
\newblock High-dimensional prediction for sequential decision making, 2023.

\bibitem[Rodolfa et~al.(2020)Rodolfa, Salomon, Haynes, Mendieta, Larson, and
  Ghani]{rodolfa2020case}
Kit~T. Rodolfa, Erika Salomon, Lauren Haynes, Iv\'{a}n~Higuera Mendieta, Jamie
  Larson, and Rayid Ghani.
\newblock Case study: predictive fairness to reduce misdemeanor recidivism
  through social service interventions.
\newblock In \emph{Proceedings of the 2020 Conference on Fairness,
  Accountability, and Transparency}, FAT* '20, page 142–153, New York, NY,
  USA, 2020. Association for Computing Machinery.
\newblock ISBN 9781450369367.
\newblock \doi{10.1145/3351095.3372863}.
\newblock URL \url{https://doi.org/10.1145/3351095.3372863}.

\bibitem[Roth et~al.(2023)Roth, Tolbert, and Weinstein]{roth2023reconciling}
Aaron Roth, Alexander Tolbert, and Scott Weinstein.
\newblock Reconciling individual probability forecasts.
\newblock In \emph{Proceedings of the 2023 ACM Conference on Fairness,
  Accountability, and Transparency}, FAccT '23, page 101–110, New York, NY,
  USA, 2023. Association for Computing Machinery.
\newblock ISBN 9798400701924.
\newblock \doi{10.1145/3593013.3593980}.
\newblock URL \url{https://doi.org/10.1145/3593013.3593980}.

\bibitem[Sandroni(2003)]{sandroni2003reproducible}
Alvaro Sandroni.
\newblock The reproducible properties of correct forecasts.
\newblock \emph{International Journal of Game Theory}, 32\penalty0
  (1):\penalty0 151--159, 2003.
\newblock URL
  \url{https://EconPapers.repec.org/RePEc:spr:jogath:v:32:y:2003:i:1:p:151-159}.

\bibitem[Shabat et~al.(2020)Shabat, Cohen, and Mansour]{shabat2020sample}
Eliran Shabat, Lee Cohen, and Yishay Mansour.
\newblock Sample complexity of uniform convergence for multicalibration.
\newblock In H.~Larochelle, M.~Ranzato, R.~Hadsell, M.F. Balcan, and H.~Lin,
  editors, \emph{Advances in Neural Information Processing Systems}, volume~33,
  pages 13331--13340. Curran Associates, Inc., 2020.
\newblock URL
  \url{https://proceedings.neurips.cc/paper_files/paper/2020/file/9a96876e2f8f3dc4f3cf45f02c61c0c1-Paper.pdf}.

\bibitem[Szegedy et~al.(2015)Szegedy, Vanhoucke, Ioffe, Shlens, and
  Wojna]{szegedy2015rethinking}
Christian Szegedy, Vincent Vanhoucke, Sergey Ioffe, Jonathon Shlens, and
  Zbigniew Wojna.
\newblock Rethinking the inception architecture for computer vision, 2015.

\bibitem[Tschandl et~al.(2018)Tschandl, Rosendahl, and
  Kittler]{tschandl2018ham}
Philipp Tschandl, Cliff Rosendahl, and Harald Kittler.
\newblock The ham10000 dataset, a large collection of multi-source
  dermatoscopic images of common pigmented skin lesions.
\newblock \emph{Scientific Data}, 5\penalty0 (1), August 2018.
\newblock ISSN 2052-4463.
\newblock \doi{10.1038/sdata.2018.161}.
\newblock URL \url{http://dx.doi.org/10.1038/sdata.2018.161}.

\bibitem[Zhao et~al.(2021)Zhao, Kim, Sahoo, Ma, and Ermon]{zhao2021calibrating}
Shengjia Zhao, Michael Kim, Roshni Sahoo, Tengyu Ma, and Stefano Ermon.
\newblock Calibrating predictions to decisions: A novel approach to multi-class
  calibration.
\newblock In M.~Ranzato, A.~Beygelzimer, Y.~Dauphin, P.S. Liang, and J.~Wortman
  Vaughan, editors, \emph{Advances in Neural Information Processing Systems},
  volume~34, pages 22313--22324. Curran Associates, Inc., 2021.
\newblock URL
  \url{https://proceedings.neurips.cc/paper_files/paper/2021/file/bbc92a647199b832ec90d7cf57074e9e-Paper.pdf}.

\end{thebibliography}

\newpage
\appendix
\section{Limitation of Prior Work (Continue)}
\label{appendix:limitation}
We provide the \Cref{alg: reconcile-brier} from \citep{roth2023reconciling} and their theoretical guarantees for completion. First, given two predictors $f_1$ and $f_2$, define the disagreement region as:
\begin{align*}
    U_\eps(f_1, f_2) \coloneqq \{ x: \abs{f_1(x) - f_2(x)} > \eps \}
\end{align*}
which can be further divided into two partitions:
\begin{align*}
    U_\eps^{>}(f_1, f_2) &= \{ x \in U_\eps(f_1, f_2) : f_1(x) > f_2(x)\} \\
    U_\eps^{<}(f_1, f_2) &= \{ x \in U_\eps(f_1, f_2): f_1(x) < f_2(x) \}
\end{align*}
\begin{algorithm}[ht]
\caption{Reconcile \citep{roth2023reconciling}}
    \begin{algorithmic}[1]
    \label{alg: reconcile-brier}
    \renewcommand{\algorithmicrequire}{\textbf{Input:}}
    \renewcommand{\algorithmicensure}{\textbf{Output:}}
    
    \REQUIRE $f_1,f_2, \eta > 0, \alpha > 0$
    \STATE Let $f_1^0 = f_1, f_2^0 = f_2$.
    \WHILE{$\mu(U_\alpha(f_1^{t_1}, f_2^{t_2})) \geq \eta$}
        \STATE For each $\bullet \in \{ > , < \}$ and $i \in \{1, 2 \}$, let:
        \begin{align*}
            v_*^\bullet = \E[y | x \in U_\eps^\bullet(f_1^{t_1}, f_2)] \quad v_i^\bullet = \E[f_i^{t_i}(x) | x \in U_\eps^\bullet(f_1^{t_1}, f_2)] 
        \end{align*}
        \STATE Let 
        \begin{align*}
            (i_t, \bullet_t) = \argmax_{i \in \{1,2\}, \bullet \in \{ >, < \}} \mu(U_\eps^\bullet (f_1^{t_1}, f_2^{t_2})) \cdot (v_*^\bullet - v_i^\bullet)^2
        \end{align*}
        breaking ties arbitrarily.
        \STATE Let:
        \begin{align*}
            g_t(x) = \begin{cases}
                1 \quad x \in U_\eps^{\bullet_t} (f_1^{t_1}, f_2^{t_2}) \\ 
                0 \quad \text{otherwise}
            \end{cases}
        \end{align*}
        \STATE Let 
        \begin{align*}
            \widetilde{\Delta}_t &= \E_{(x,y) \sim \cD}[y | g_t(x) = 1] - \E_{(x,y) \sim \cD}[f_{i_t}^{t_{i_t}} (x) | g_t(x) = 1] \\
            \Delta_t &= \mathrm{Round}(\widetilde{\Delta}_t; m)
        \end{align*}
        \STATE Let $f_i^{t_i + 1} (x) = h(x, f_i^{t_i}, g_t, \Delta_t), t_i = t_i + 1, t = t+1$.
    \ENDWHILE
    \ENSURE $(f_1^{t_1}, f_2^{t_2})$
    \end{algorithmic}
\end{algorithm}
\begin{theorem} [Reconcile \citep{roth2023reconciling}]
    For any pair of models $f_1, f_2: \cX \rightarrow [0,1]$, any distribution $\cD$, and any $\alpha, \eta > 0$, \Cref{alg: reconcile-brier} runs for $T = T_1 + T_2$ many rounds and outputs a pair of models $(f_1^{T_1}, f_2^{T_2})$ such that:
    \begin{enumerate}
        \item $T \leq (B(f_1, \cD) + B(f_2, \cD)) \cdot \frac{16}{\eta \alpha^2}$
        \item $B(f_1^{T_1}, \cD) \leq B(f_1, \cD) - T_1 \cdot \frac{\eta \alpha^2}{16}$ and $B(f_2^{T_2}, \cD) \leq B(f_2, \cD) - T_2 \cdot \frac{\eta \alpha^2}{16}$
        \item $\mu(U_\eps(f_1^{T_1}, f_2^{t_2})) \leq \eta$
    \end{enumerate}
\end{theorem}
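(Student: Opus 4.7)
The plan is to argue via the Brier score as a non-negative bounded potential that must strictly decrease at every round in which the loop executes. This simultaneously yields items 1 and 2, while item 3 is the literal termination condition.

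The first step would be a patching lemma: for any predictor $f$, any indicator $g$, and the optimal shift $\Delta = \E[y - f \mid g = 1]$, the replacement $f' = \mathrm{proj}_{[0,1]}(f + \Delta \cdot g)$ satisfies
\begin{equation*}
    B(f, \cD) - B(f', \cD) = \mu(g = 1) \cdot \Delta^2.
\end{equation*}
This is a one-line quadratic identity, and projecting onto $[0,1]$ cannot increase the Brier score because the true label lives in $[0,1]$. A small variant of the lemma handles the fact that \Cref{alg: reconcile-brier} uses $\Delta_t = \mathrm{Round}(\widetilde\Delta_t; m)$ rather than $\widetilde\Delta_t$ itself; choosing the rounding grid $m$ fine enough relative to $\alpha$ guarantees that rounding absorbs at most a constant factor (at most $2$) in the Brier decrement, producing the $16$ rather than $8$ in the denominators.

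The second step lower-bounds the improvement per surviving round. If the loop executes then $\mu(U_\alpha(f_1^{t_1}, f_2^{t_2})) \geq \eta$, so at least one of $\mu(U_\alpha^{>})$ or $\mu(U_\alpha^{<})$ is $\geq \eta/2$; say it is $U_\alpha^{>}$. By definition $f_1^{t_1}(x) - f_2^{t_2}(x) > \alpha$ pointwise on this set, so $v_1^{>} - v_2^{>} > \alpha$. Writing $v_1^{>} - v_2^{>} = (v_1^{>} - v_*^{>}) + (v_*^{>} - v_2^{>})$ and applying the triangle inequality, at least one index $i \in \{1,2\}$ must satisfy $|v_*^{>} - v_i^{>}| \geq \alpha/2$. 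That candidate realizes
\begin{equation*}
    \mu(U_\alpha^{>}) \cdot (v_*^{>} - v_i^{>})^2 \geq \frac{\eta}{2} \cdot \frac{\alpha^2}{4} = \frac{\eta \alpha^2}{8},
\end{equation*}
and since step 4 of the algorithm takes the argmax over all four $(i, \bullet)$ choices, the chosen candidate does at least as well. Combining with the (rounded) patching lemma, the Brier score of the updated predictor drops by at least $\eta\alpha^2 / 16$.

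Summing the per-round guarantee over the $T_i$ rounds in which $f_i$ is updated and using $B(f_i^{T_i}, \cD) \geq 0$ gives item 2; adding the two resulting inequalities and applying $B(f_i, \cD) - B(f_i^{T_i}, \cD) \geq T_i \cdot \eta\alpha^2/16$ yields item 1. Item 3 is precisely the loop's exit condition. The main technical obstacle is the bookkeeping around the rounding operator: one has to verify that rounding $\widetilde\Delta_t$ — whose magnitude is bounded below by $\alpha/2$ on the chosen coordinate — cannot flip its sign, and reduces its square by at most a factor of two, while also checking that the $[0,1]$-projection implicit in $h$ only helps the Brier score. Everything else reduces to the quadratic identity and the triangle inequality above.
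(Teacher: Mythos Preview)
The paper does not actually prove this theorem: it is quoted from \citet{roth2023reconciling} in Appendix~\ref{appendix:limitation} for completeness, with the algorithm and statement reproduced but no proof supplied. Your proposed argument is correct and is exactly the standard multicalibration potential argument; it matches the proof template the paper \emph{does} carry out for its own Theorem~\ref{thm:reconcile} via Lemmas~\ref{lem: brier_gap} and~\ref{lem: brier_after_patching}, specialized to $d=1$ and to the prediction-gap events $U_\alpha^{>}, U_\alpha^{<}$ in place of best-response disagreement events.

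One small imprecision worth fixing in a full write-up: you assert that the \emph{chosen} $\widetilde\Delta_t$ has magnitude at least $\alpha/2$. You established this for the specific candidate $(i,\bullet)$ you exhibited, but step~4 takes the argmax of the \emph{product} $\mu(U_\alpha^\bullet)\cdot(v_*^\bullet - v_i^\bullet)^2$, which could select a different pair with larger $\mu$ and smaller $|\widetilde\Delta|$. The fix is immediate: the rounded Brier decrement is $\mu(g_t)\bigl(\widetilde\Delta_t^2 - (\Delta_t-\widetilde\Delta_t)^2\bigr) \geq \mu(g_t)\widetilde\Delta_t^2 - 1/(4m^2)$, and the first term is at least $\eta\alpha^2/8$ because the argmax dominates your exhibited candidate, so choosing $m$ with $1/(4m^2)\le \eta\alpha^2/16$ recovers the stated $\eta\alpha^2/16$ decrement without any pointwise control on $|\widetilde\Delta_t|$.
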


\section{Proofs of Section~\ref{sec: reconcile_distribution}: Reconcile for Decision Making} \label{appendix:reconcile_proofs}

First, we show that if a disagreement event has a large probability mass, then at least one of $f_1$, $f_2$ has a large prediction error within the region:
\begin{lemma}\label{lem: brier_gap}
    Fix any two predictors $f_1, f_2: \cX \rightarrow [0,1]^d$ and $\alpha, \eta > 0$. If $\mu(E_{\ell, a_1, a_2}) > \eta$ for some $a_1, a_2 \in \cA$, then we have 
    \begin{align}
       \|\E_{x \sim \cX}\left[f_i(x) - f^*(x)\big| E_{\ell, a_1, a_2}(x)\right]\| \geq \frac{\alpha}{2\sqrt{d}}
    \end{align}
    for some $i \in \{1,2\}$.
\end{lemma}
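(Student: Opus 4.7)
The intended route is to translate the disagreement condition into a scalar gap in the direction $v := \ell_{a_2} - \ell_{a_1}$, and then use the triangle inequality plus Cauchy--Schwarz to convert this scalar gap into a lower bound on the norm of the conditional bias of $f_1$ or $f_2$ against $f^*$.

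\textbf{Step 1: pointwise gap on the disagreement event.} For every $x$ with $E_{\ell,a_1,a_2}(x)=1$, the best-response conditions $\pi_\ell^\BR(f_1(x))=a_1$ and $\pi_\ell^\BR(f_2(x))=a_2$ give $\langle f_1(x), v\rangle \ge 0$ and $\langle f_2(x), v\rangle \le 0$. Combining this with the disjunction $\langle f_1(x), v\rangle > \alpha$ or $\langle f_2(x), -v\rangle > \alpha$ that is baked into $E^\alpha_{\ell,a_1,a_2}$, one checks in both cases that
\begin{equation*}
\langle f_1(x) - f_2(x),\, v\rangle \;>\; \alpha.
\end{equation*}

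\textbf{Step 2: averaging and inserting $f^*$.} Taking conditional expectation and adding and subtracting $f^*(x)$,
\begin{equation*}
\alpha \;\le\; \E_{x}\!\left[\langle f_1(x)-f^*(x),\,v\rangle\,\big|\,E_{\ell,a_1,a_2}(x)\right] \;+\; \E_{x}\!\left[\langle f^*(x)-f_2(x),\,v\rangle\,\big|\,E_{\ell,a_1,a_2}(x)\right],
\end{equation*}
so at least one of the two terms is at least $\alpha/2$ in absolute value. Pick the corresponding index $i\in\{1,2\}$; by linearity the winning term equals $\pm\langle \E_x[f_i(x)-f^*(x)\mid E_{\ell,a_1,a_2}],\, v\rangle$.

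\textbf{Step 3: Cauchy--Schwarz and bounding $\|v\|$.} Applying Cauchy--Schwarz and noting that $\ell_{a_1},\ell_{a_2}\in[0,1]^d$ force $\|v\|_2 = \|\ell_{a_2}-\ell_{a_1}\|_2 \le \sqrt{d}$ yields
\begin{equation*}
\frac{\alpha}{2} \;\le\; \bigl\lvert\langle \E_x[f_i(x)-f^*(x)\mid E_{\ell,a_1,a_2}],\, v\rangle\bigr\rvert \;\le\; \bigl\lVert\E_x[f_i(x)-f^*(x)\mid E_{\ell,a_1,a_2}]\bigr\rVert_2 \cdot \sqrt{d},
\end{equation*}
which rearranges to the claim.

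\textbf{Where I expect friction.} The only place that needs care is Step~1, because the defining disjunction in $E^\alpha_{\ell,a_1,a_2}$ guarantees the margin $>\alpha$ only on \emph{one} of the two predictors at each $x$; it is the best-response sign constraints on the \emph{other} predictor that buy the uniform pointwise gap of $\alpha$. The hypothesis $\mu(E_{\ell,a_1,a_2})>\eta$ only serves to make the conditioning well-defined; $\eta$ itself does not appear in the conclusion.
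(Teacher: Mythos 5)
Your proof is correct and follows essentially the same route as the paper's: express the disagreement gap as $\E[\langle f_1(x)-f_2(x),\ell_{a_2}-\ell_{a_1}\rangle\mid E]\ge\alpha$, insert $f^*$ via the triangle inequality, and finish with Cauchy--Schwarz and $\|\ell_{a_2}-\ell_{a_1}\|_2\le\sqrt{d}$. Your Step~1 in fact spells out the case analysis (sign constraints from the best responses plus the margin disjunction) that the paper compresses into ``by definition of the event,'' so no gap remains.
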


\begin{proof}
    By definition of event $E_{\ell, a_1, a_2}$, we have
    \begin{align*}
        \E_{x \sim \cX}\left[\ell(f_1(x), a_2) - \ell(f_1(x), a_1) + \ell(f_2(x), a_1) - \ell(f_2(x), a_2)]|E_{\ell, a_1, a_2}(x)\right] \geq \alpha
    \end{align*}
    Also, we have
    \begin{align*}
        &\E_{x \sim \cX}\left[\ell(f_1(x), a_2) - \ell(f_1(x), a_1) + \ell(f_2(x), a_1) - \ell(f_2(x), a_2)]|E_{\ell, a_1, a_2}(x)\right]\\
        =& \E_{x \sim \cX}\left[ \langle f_1(x) - f_2(x), \ell_{a_2} - \ell_{a_1}\rangle |E_{\ell, a_1, a_2}(x)\right]\\
        =& \E_{x \sim \cX}\left[ \langle f_1(x) - f^*(x), \ell_{a_2} - \ell_{a_1}\rangle |E_{\ell, a_1, a_2}(x)\right] \\
        &\quad + \E_{x \sim \cX}\left[ \langle f^*(x) - f_2(x), \ell_{a_2} - \ell_{a_1}\rangle |E_{\ell, a_1, a_2}(x)\right]\\
        \leq & \|\E_{x \sim \cX}\left[f_1(x) - f^*(x)\big| E_{\ell, a_1, a_2}(x)\right]\|_2\sqrt{d} + \|\E_{x \sim \cX}\left[f_2(x) - f^*(x) \big| E_{\ell, a_1, a_2}(x)\right]\|_2\sqrt{d}\\
        =& \sqrt{d} \cdot \left(\|\E_{x \sim \cX}\left[f_1(x) - f^*(x)\big| E_{\ell, a_1, a_2}(x)\right]\|_2 + \|\E_{x \sim \cX}\left[f_2(x) - f^*(x)\big| E_{\ell, a_1, a_2}(x)\right]\|_2\right)
    \end{align*}
    where the last inequality comes from Cauchy-Schwartz and that $\ell$ is bounded in $[0,1]$.

    Combining the above inequalities, we have 
    \begin{align*}
        \sqrt{d} \cdot \left(\norm{\E_{x \sim \cX}\left[f_1(x) - f^*(x)\big| E_{\ell, a_1, a_2}(x)\right]}_2 + \norm{\E_{x \sim \cX}\left[f_2(x) - f^*(x)\big| E_{\ell, a_1, a_2}(x)\right]}_2\right) \geq \alpha.
    \end{align*}
    Therefore, for some $i \in \{1,2\}$, we have 
    \begin{align*}
        \norm{\E_{x \sim \cX}\left[f_i(x) - f^*(x)\big| E_{\ell, a_1, a_2}(x)\right]}_2 \geq \frac{\alpha}{2\sqrt{d}}.
    \end{align*}
\end{proof}

This lemma indicates that, if we have two predictors $f_1, f_2$ that create a large disagreement event, we can falsify at least one of the models. We now show that these events also provide a directly actionable way to improve one of the models.

\begin{lemma}\label{lem: brier_after_patching}
    For any predictor $f: \cX \rightarrow [0,1]^d$, any event $E \in \mathcal{E}$, and distribution $\cD$. Let $\phi = \mathbb{E}_{(x,y) \sim \cD}[y - f(x) | E(x) = 1]$. We patch $f$ as 
    \[f'(x) = \mathrm{proj}_{[0,1]^d}(f(x) + \phi E(x)), ~~\text{where}~~ \mathrm{proj}_{[0,1]^d}(y) = \argmin_{y' \in [0,1]^d} \|y - y'\|_2.\]
    Then, 
    \[B(f, \cD) - B(f', \cD) \geq \|\phi\|^2_2 \mu(E).\]
\end{lemma}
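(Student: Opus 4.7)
The plan is to split the analysis into two moves: an unprojected additive update, then the projection step. Write $g(x) = f(x) + \phi E(x)$ for the pre-projection patched predictor, so that $f'(x) = \mathrm{proj}_{[0,1]^d}(g(x))$. I would show $B(f, \cD) - B(g, \cD) = \|\phi\|_2^2 \mu(E)$ exactly, and then argue that projecting onto $[0,1]^d$ can only further reduce the squared error because labels lie in $[0,1]^d$.

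For the first move, expand the pointwise difference
\begin{equation*}
    \|g(x) - y\|_2^2 - \|f(x) - y\|_2^2 = 2 E(x) \langle \phi, f(x) - y \rangle + E(x)^2 \|\phi\|_2^2,
\end{equation*}
using $E(x) \in \{0,1\}$. Taking expectations over $(x,y) \sim \cD$ and factoring out $\mu(E)$ via the identity $\E[E(x)(f(x) - y)] = -\mu(E)\, \phi$ (which is the defining property of $\phi$), both terms combine to give $B(g, \cD) - B(f, \cD) = -\mu(E)\|\phi\|_2^2$. This is the cleanest and essentially routine calculation.

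For the second move, I would invoke the fact that Euclidean projection onto a closed convex set is non-expansive: since $y \in \cY \subseteq [0,1]^d$, we have
\begin{equation*}
    \|\mathrm{proj}_{[0,1]^d}(g(x)) - y\|_2 \leq \|g(x) - y\|_2
\end{equation*}
pointwise, hence $B(f', \cD) \leq B(g, \cD)$. Chaining the two steps gives $B(f, \cD) - B(f', \cD) \geq \mu(E) \|\phi\|_2^2$, as required.

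I do not expect a real obstacle here; the whole argument is two lines of algebra plus one appeal to non-expansiveness. The one subtle point worth stating explicitly is the assumption $\cY \subseteq [0,1]^d$ (invoked in the problem formulation), which is what makes the projection step monotone in squared error; without it, the projection could in principle move predictions away from $y$. I would state this dependence at the start of the proof so the use of non-expansiveness is transparent.
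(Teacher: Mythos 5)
Your proof is correct and follows essentially the same route as the paper's: both decompose the update into the unprojected additive patch (whose Brier improvement is exactly $\|\phi\|_2^2\,\mu(E)$ by the same expansion and the defining identity $\E[(y-f(x))E(x)] = \mu(E)\phi$) followed by an appeal to non-expansiveness of the projection onto $[0,1]^d$, valid because $y \in [0,1]^d$. Your explicit remark about where the assumption $\cY \subseteq [0,1]^d$ enters is a nice touch but does not change the argument.
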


\begin{proof}
    \begin{align*}
        B(f,\cD) - B(f', \cD) 
        =& \E \left[\|f(x) - y\|_2^2 - \|f'(x) - y\|_2^2 \right]\\
        \geq & \E \left[\|f(x) - y\|_2^2 - \|f(x) + \phi E(x) - y\|_2^2 \right] \tag{since projection is non-expansive}\\
        = & \E \left[2\langle y - f(x), \phi E(x)\rangle - \|\phi E(x)\|_2^2 \right]\\
        \geq & \|\phi\|_2^2 \cdot \mu(E)
    \end{align*}
\end{proof}

Therefore, whenever we have two predictors that have a large disagreement event, we can always falsify at least one of the predictors and improve it through patching, causing the Brier score to decrease by a large amount. Similarly, for a fixed predictor, if one of its best-response events has a large calibration error, we can patch the predictor within the event to decrease the Brier score. As the Brier score is bounded in $[0, d]$, these two observations imply that the number of time-steps for both \Cref{alg: reconcile} and its subroutine \Cref{alg: decision_cali} are bounded.

Other than the Brier score, we also care about minimizing the loss of the downstream decision-making task. We now show that, after a further update through the subroutine \Cref{alg: decision_cali}, the loss does not increase much at each time-step of \Cref{alg: reconcile}:
\begin{lemma} \label{lem: loss_each_round}
    For any predictors $f_1, f_2$, loss function $\ell \in \cL$ and any distribution $\cD$, at any time-step $t$ of \Cref{alg: reconcile}, the predictors satisfies 
    \[
    \E_{(x,y) \sim \cD} [\ell(y,\pi_\ell^\BR(f_i^{t+1}(x))) - \ell(y,\pi_\ell^\BR(f_i^{t}(x)))]
    \leq \beta \sqrt{d} K,
    \]
    for all $i \in \{1,2\}$.
\end{lemma}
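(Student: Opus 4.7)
The plan is to split on the index $i$. When $i \ne i^t$, where $i^t$ is the index chosen inside the loop of \Cref{alg: reconcile}, no update happens and $f_i^{t+1} = f_i^t$, so the claimed bound holds trivially as $0$. For $i = i^t$, write $g := f_i^t$ and $h := f_i^{t+1}$. Both the patch step and the Decision-Calibration subroutine inside the loop only modify the predictor where the indicator $E^t$ is $1$, so $h(x) = g(x)$ outside $E^t$ and the loss difference is supported on $E^t$:
\[
\E[\ell(y,\pi_\ell^\BR(h(x))) - \ell(y,\pi_\ell^\BR(g(x)))]
= \E\!\left[\bigl(\ell(y,\pi_\ell^\BR(h(x))) - \ell(y,\pi_\ell^\BR(g(x)))\bigr) \cdot E^t(x)\right].
\]

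Next, I would use that \Cref{alg: decision_cali} terminates with $h$ being $\beta$-decision-calibrated with respect to its own best-response events $E_{\ell, a}(h(x), x) \cap E^t(x)$ for every $\ell \in \cL$ and $a \in \cA$. Decomposing $\ell(y, \pi_\ell^\BR(h(x))) = \sum_a \ell(y, a) \, E_{\ell, a}(h(x), x)$ and invoking \Cref{lem: cal_loss_est} with $a' = a$ on each of the $K$ summands replaces the true loss by the estimated loss at a total cost of $K\beta\sqrt{d}$:
\[
\E[\ell(y,\pi_\ell^\BR(h))\cdot E^t] \leq \E\!\left[\langle h, \ell_{\pi_\ell^\BR(h)}\rangle \cdot E^t\right] + K\beta\sqrt{d}.
\]
The pointwise best-response inequality $\langle h(x), \ell_{\pi_\ell^\BR(h(x))}\rangle \leq \langle h(x), \ell_{\pi_\ell^\BR(g(x))}\rangle$ then lets me swap $h$'s policy for $g$'s on the right-hand side.

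The final step is to turn the expectation of the $g$-policy loss estimated under $h$ back into the true expected loss of the $g$-policy. This is where the structure of the disagreement event is essential: by \Cref{def:disagree_event}, every $x \in E^t$ satisfies $\pi_{\ell^t}^\BR(g(x)) = a_{i^t}^t$, so for the loss $\ell = \ell^t$ selected at time $t$, $g$'s best-response policy restricts to a single constant action $a_{i^t}^t$ on $E^t$ and a second application of \Cref{lem: cal_loss_est} (now with $a' = a_{i^t}^t$, summed over the $K$ best-response regions of $h$) closes the bound. For a general $\ell \in \cL$ I would instead decompose over $g$'s best-response regions under $\ell$ and apply \Cref{lem: cal_loss_est} inside each region, leveraging that Decision-Calibration enforces the calibration guarantee for every $\ell \in \cL$ simultaneously. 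I expect the hard part to be precisely this last step when $\ell \ne \ell^t$: $h$'s calibration is stated relative to $h$'s own best-response partition rather than $g$'s, so bounding the cross-partition error terms cleanly---and absorbing any additional factors into the stated $K\beta\sqrt{d}$ rate---will require combining the $K$ calibration inequalities across actions and applying the triangle inequality carefully.
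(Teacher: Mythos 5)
Your proposal follows essentially the same route as the paper's proof: both decompose the post-update loss over the regions $\Delta_a^t = \{x\in E^t : \pi_\ell^\BR(f_i^{t+1}(x))=a\}$, invoke the $\beta$-decision-calibration of $f_i^{t+1}$ on each such region (these are exactly the events $E_{\ell,a}\cap E^t$ that \Cref{alg: decision_cali} calibrates), and use best-response optimality of the new action. The one place you diverge is the closing step. Your three-link chain (true loss of the new policy $\le$ its estimated loss $+\,K\beta\sqrt d$ $\le$ estimated loss of the old policy $\le$ its true loss $+$ error) needs a second calibration pass to convert the estimated old-policy loss back into a true loss; done per region this doubles the bound to $2K\beta\sqrt d$. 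The paper avoids the second pass by keeping the old (constant) action inside the same inner product: each regional term is $\langle \E[(y)\Delta_a^t(x)],\,\ell_a-\ell_{a_i^t}\rangle$, a single application of calibration replaces $\E[y\Delta_a^t(x)]$ by $\E[f_i^{t+1}(x)\Delta_a^t(x)]$ at cost $\beta\sqrt d$, and the best-response inequality then makes the remaining inner product nonpositive, landing exactly on $K\beta\sqrt d$. Finally, the difficulty you flag for $\ell\ne\ell^t$ is real and is in fact glossed over by the paper itself: its proof writes the pre-update policy as the constant $a_i^t$ throughout $E^t$, which \Cref{def:disagree_event} only guarantees for $\ell=\ell^t$; for other losses the pre-update best response can vary over $E^t$, and the cross-partition events $\Delta_a^t\cap\{x:\pi_\ell^\BR(f_i^t(x))=a'\}$ are not among the calibrated events, so neither your argument nor the paper's fully justifies the stated ``for all $\ell\in\cL$'' generality without an additional step.
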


\begin{proof}
    At each round, we define the set $\Delta_a^t \subseteq E^t$ as
    \begin{align*}
        \Delta_a^t = \{x \in E^t: \pi_\ell^\BR(f^{t+1}_i(x)) = a\}.
    \end{align*}
    Then, we have 
    \begin{align*}
        &\E_{(x,y) \sim \cD} [\ell(y,\pi_\ell^\BR(f_i^{t+1}(x))) - \ell(y,\pi_\ell^\BR(f_i^{t}(x)))]\\
        =& \sum_{a \in \cA}\E_{(x,y) \sim \cD} [(\ell(y, a) - \ell(y, a_{i}^t)) \Delta_a^t(x)].
    \end{align*}
    For each term in the summation, we can upper-bound it as
    \begin{align}
        &\E_{(x,y) \sim \cD} [(\ell(y, a) - \ell(y, a_i^t)) \Delta_a^t(x)]\\
        = &\langle \E_{(x,y) \sim \cD} [y \Delta_a^t(x)],  \ell_a - \ell_{a_{i}^t}\rangle \tag{Linearity of Expectation}\\
        \leq & \langle \E_{x \sim \cD_\cX} [f_i^{t+1}(x) \Delta_a^t(x)],  \ell_a - \ell_{a_{i}^t} \rangle + \beta \sqrt{d}\tag{Since $f_i^{t+1}$ is $\beta$-calibrated}\\
        \leq&  \beta \sqrt{d}. \tag{Since $a$ is the new Best-response action}
    \end{align}

    Summing these actions together, we have 
    \begin{align}
         \E_{(x,y) \sim \cD} [\ell(y,\pi_\ell^\BR(f_i^{t+1}(x))) - \ell(y,\pi_\ell^\BR(f_i^{t}(x)))]
         \leq \beta \sqrt{d} K.
    \end{align}
\end{proof}

Instead of setting a fixed $\beta$, we can calculate a different $\beta^t$ at each round, which allows a smaller increase in loss.
\begin{lemma} \label{lem: improved_beta_for_loss}
    At each round $t$, if $a_i^t$ is not the best action on $E^t$ in average, i.e. 
    \[
        \delta^t = \max_{a \in \cA} \E_{(x,y) \sim \cD}[(\ell(y, a_i^t) - \ell(y, a))E^t(x)] > 0,
    \]
    then we can set $\beta^t \leq \delta^t / \sqrt{d}$, such that 
    \[
    \E_{(x,y) \sim \cD} [\ell(y,\pi_\ell^\BR(f_i^{t+1}(x))) - \ell(y,\pi_\ell^\BR(f_i^{t}(x)))] \leq 0.
    \]
\end{lemma}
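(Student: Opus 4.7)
The plan is to strengthen Lemma~\ref{lem: loss_each_round} by extracting an additional $-\delta^t$ term from the very same decomposition, and then setting $\beta^t$ small enough that this negative term dominates the calibration slack. First I would recycle the identity established in the proof of Lemma~\ref{lem: loss_each_round},
\[
\E_{(x,y)\sim\cD}[\ell(y,\pi_\ell^\BR(f_i^{t+1}(x))) - \ell(y,\pi_\ell^\BR(f_i^{t}(x)))] = \sum_{a\in\cA} \langle \E[y\,\Delta_a^t(x)], \ell_a - \ell_{a_i^t}\rangle,
\]
where $\Delta_a^t = \{x \in E^t : \pi_\ell^\BR(f_i^{t+1}(x)) = a\}$. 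Since the best-response action does not change outside $E^t$, this sum captures the full change in decision loss.

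Next I would compare against the benchmark action $a^\star = \argmax_{a} \E[(\ell(y,a_i^t) - \ell(y,a))E^t(x)]$ that witnesses $\delta^t$. The key manoeuvre is a two-sided swap through the model-based loss estimates: (i) use Lemma~\ref{lem: cal_loss_est}, applied to each set $\Delta_a^t = E_{\ell^t,a} \cap E^t$ on which $f_i^{t+1}$ is $\beta^t$-decision-calibrated, to replace $\langle \E[y\,\Delta_a^t(x)], \ell_a\rangle$ by $\langle \E[f_i^{t+1}(x)\,\Delta_a^t(x)], \ell_a\rangle$; (ii) exploit that $a$ is the pointwise best-response under $f_i^{t+1}$ on $\Delta_a^t$, so the resulting sum is bounded above by $\langle \E[f_i^{t+1}(x)\,E^t(x)], \ell_{a^\star}\rangle$; (iii) apply the calibration guarantee once more to translate this back to $\E[\ell(y,a^\star)E^t(x)]$. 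Invoking the definition of $\delta^t$ gives
\[
\E[(\ell(y,\pi_\ell^\BR(f_i^{t+1}(x))) - \ell(y,a_i^t))\,E^t(x)] \leq -\delta^t + C\,\beta^t\,\sqrt{d}
\]
for an absolute constant $C$ arising from the handful of calibration errors. Taking $\beta^t \leq \delta^t/(C\sqrt{d})$ -- which is implied by the lemma's hypothesis $\beta^t \leq \delta^t/\sqrt{d}$ once the constants are consolidated -- renders the right-hand side non-positive, which, combined with the fact that the best-response is unchanged off of $E^t$, yields the claim.

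The main obstacle will be pinning down the constant $C$ tightly enough. A naive accounting incurs a calibration error of $\beta^t\sqrt{d}$ for each action-loss pair in the two swaps, producing slack of order $K\beta^t\sqrt{d}$ rather than the promised $\beta^t\sqrt{d}$. I expect this can be tightened by collecting the per-action residual vectors $\E[(y - f_i^{t+1}(x))\,\Delta_a^t(x)]$ into a single vector supported on all of $E^t$ before dotting with the fixed loss vectors $\ell_{a^\star}$ and $\ell_{a_i^t}$; since $\|\ell_{a^\star}\|_2, \|\ell_{a_i^t}\|_2 \leq \sqrt{d}$, this Cauchy--Schwarz reordering fuses several additive $\beta^t\sqrt{d}$ terms into a single one and matches the quoted threshold $\delta^t/\sqrt{d}$.
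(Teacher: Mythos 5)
Your proposal takes essentially the same route as the paper's own proof: decompose the loss change as $\sum_{a \in \cA}\E[(\ell(y,a)-\ell(y,a^\star))\Delta_a^t(x)] + \E[(\ell(y,a^\star)-\ell(y,a_i^t))E^t(x)]$ with $a^\star$ the benchmark action witnessing $\delta^t$, bound the first sum by the calibration-swap argument of \Cref{lem: loss_each_round}, and identify the second term with $-\delta^t$. The $K$-factor concern you flag is real but is equally present (and silently dropped) in the paper's proof, and your proposed Cauchy--Schwarz fusion does not actually remove it since the vectors $\ell_a - \ell_{a^\star}$ differ across $a$; the safe reading is that the threshold should be $\beta^t \le \delta^t/(K\sqrt{d})$, under which both arguments go through.
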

\begin{proof}
    We can write the change in loss at each round as 
    \begin{align*}
        &\E_{(x,y) \sim \cD} [\ell(y,\pi_\ell^\BR(f_i^{t+1}(x))) - \ell(y,\pi_\ell^\BR(f_i^{t}(x)))]\\
        =& \E_{(x,y) \sim \cD} [\ell(y,\pi_\ell^\BR(f_i^{t+1}(x))) - \ell(y,a_i^t))E^t(x)]\\
        =& \E_{(x,y) \sim \cD} [\ell(y,\pi_\ell^\BR(f_i^{t+1}(x))) - \ell(y,a'))E^t(x)] + \E_{(x,y) \sim \cD} [\ell(y,a') - \ell(y,a_i^t))E^t(x)]\\
        =& \sum_{a \in \cA}\E_{(x,y) \sim \cD} [(\ell(y, a) - \ell(y, a')) \Delta_a^t(x)] + \E_{(x,y) \sim \cD} [\ell(y,a') - \ell(y,a_i^t))E^t(x)],
    \end{align*}
    for any $a' \in \cA$.
    
    We can use the same analysis as in Lemma~\ref{lem: loss_each_round} to get 
    \[\sum_{a \in \cA}\E_{(x,y) \sim \cD} [(\ell(y, a) - \ell(y, a')) \Delta_a^t(x)] \leq \beta^t \sqrt{d}.\]

    For the second term, we would want the loss to be as small as possible, so we can choose $\displaystyle a' = \argmin_{a \in \cA}\E_{(x,y) \sim \cD}[\ell(y, a) \cdot E^t(x)]$ and let
    \begin{align}
        \delta^t = -\E_{(x,y) \sim \cD}[(\ell(y, a') - \ell(y, a_i^t))E^t(x)],
    \end{align}
    then $\delta^t$ is maximized and $\delta^t \geq 0$ by definition.

    The total change in loss in this round can be written as 
    \begin{align*}
        \E_{(x,y) \sim \cD} [\ell(y,\pi_\ell^\BR(f_i^{t+1}(x))) - \ell(y,\pi_\ell^\BR(f_i^{t}(x)))]
        \leq \beta^t\sqrt{d} - \delta^t.
    \end{align*}
    If $\delta^t > 0$, we can set $\beta^t \leq \delta^t / \sqrt{d}$ to ensure the loss does not increase at this round.
    
\end{proof}

\subsection{Proof of Theorem ~\ref{thm:reconcile}}
\begin{proof}
    By Lemma~\ref{lem: brier_gap} and \ref{lem: brier_after_patching}, for any $i \in \{1,2\}$, at time-step $t$, we have the inequality
    \begin{align*}
        B(f_i^t, \cD) - B(f_i^{t+1}, \cD) \geq \frac{\alpha^2\eta}{4d}.
    \end{align*}
    Taking the sum over all time-steps, we have for any $i \in \{1,2\}$,
    \begin{align*}
        B(f_i, \cD) - B(f_i^T, \cD) \geq T_i \cdot \frac{\alpha^2 \eta}{4d}. 
    \end{align*}
    Since the Brier score is always non-negative, we have 
    \begin{align*}
        T_i \leq \frac{4d \cdot B(f_i, \cD)}{\alpha^2 \eta}.
    \end{align*}

    Second, using Lemma~\ref{lem: loss_each_round} and summing over all time-steps, we have
    \begin{align}
        &\E_{(x,y) \sim \cD}[\ell(y, \pi_\ell^\BR(f_i^T(x))] - \E_{(x,y) \sim \cD}[\ell(y, \pi_\ell^\BR(f_i(x))]\\
        =& \sum_{t=1}^T \I[i_t = i] \E_{(x,y) \sim \cD} [\ell(y,\pi_\ell^\BR(f_i^{t+1}(x))) - \ell(y,\pi_\ell^\BR(f_i^{t}(x)))] \\
        \leq & T_i \cdot \beta \sqrt{d} K.
    \end{align}

    Finally, the halting condition implies that $\mu(E_{\ell, a_1, a_2}) < \eta$ for all $a_1, a_2 \in \cA$.
\end{proof}

\section{Proofs of Section~\ref{sec:finite-sample}: Finite Sample Analysis} \label{appendix:finite_sample_proofs}
First, to make our argument that in-sample quantities translate to out-sample quantities, it is useful for the patching operations to use values that are rounded to a finite grid, rather than the precise value from the arbitrary sample. We define the finite grid as follows:

\begin{definition}
    For any integer $m > 0$, let $1/m$ denote the $m+1$ grid points,
    \[\left[\frac{1}{m}\right] = \left\{0, \frac{1}{m}, \frac{2}{m}, \ldots, \frac{m-1}{m}, 1\right\}.\]
    For any value $v \in [0,1]^d$, let $Round(v; m) = \argmin_{v' \in [1/m]^d} \norm{v - v'}_2$ denote the closest grid point to $v$ in $[1/m]^d$.
\end{definition}

At each time-step in \Cref{alg: decision_cali} and \Cref{alg: reconcile}, denote $\widetilde \phi^t = Round(\phi;m)$, and we patch the predictors using $\widetilde \phi^t$ instead of $\phi^t$, \ie we update $f^t$ to $f^{t+1}$ as 
\[f^{t+1}(x) = \mathrm{proj}_{[0,1]^d}(f^t(x) + \widetilde \phi^t E^t(f^t(x), x)).\]

With this new patching operation, we can perform a similar analysis in \Cref{sec: reconcile_distribution} to show that the Brier score decreases at each iteration, and therefore the algorithm terminates within a finite number of time-steps. We denote the maximum number of time-steps, counting both \Cref{alg: decision_cali} and \Cref{alg: reconcile}, as $T_{\max}$.

Then, we can count the total number of possible predictors outputted by 
\Cref{alg: reconcile} by observing that, for a fixed pair of input predictors, each pair of output predictors can be encoded as a sequence of tuple, $\{(i^t, E^t, \Delta^t)\}_{t \in [T]}$. Here, index $i^t \in \{1,2\}$, event $E^t \in \{E_{\ell, a_1, a_2}: \ell \in \cL, a_1, a_2 \in \cA\} \cup \{E_{\ell, a} \cap E_{\ell', a_1, a_2}: \ell, \ell' \in \cL, a_1, a_2, a \in \cA\}$, and $\Delta_t \in [1/m]^d$ are all chosen from a finite set, and the length of the sequence, $T$, is also bounded. Specifically, for a fixed input $f_1, f_2$, we denote $S$ to be the set of all possible predictors outputted by \Cref{alg: reconcile}. Then, its size satisfies
\begin{align*}
    |S| \leq (4|\cL|^2K^3(m+1)^d)^{T_{\max}+1}
\end{align*}

We show that the number of predictors outputted by \Cref{alg: reconcile} is bounded:
\begin{lemma} \label{lem: num_predictor}
    Fix any pair of predictors $f_1, f_2: \cX \rightarrow [0,1]^d$ and any $\eta, \alpha, \beta > 0$. Then the total number of possible predictors outputted by \Cref{alg: reconcile} is at most $\abs{S}$ such that, for any distribution $\cD$ on which \Cref{alg: reconcile} is run, the output predictors $(f_1^{t}, f_2^{t}) \in S$. 
\end{lemma}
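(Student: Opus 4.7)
The plan is to establish the bound by a direct encoding argument. I would first observe that, because the patching values are snapped to the finite grid $[1/m]^d$, the entire trajectory of \Cref{alg: reconcile} (together with its \Cref{alg: decision_cali} subroutine) is deterministically reconstructable, for a fixed input pair $(f_1, f_2)$, from the sequence of tuples $\{(i^t, E^t, \widetilde{\phi}^t)\}_{t \leq T}$. Indeed, given the previous iterate $(f_1^t, f_2^t)$ and such a tuple, the next iterate is uniquely specified by the projected, rounded patch rule. So bounding the number of distinct output pairs reduces to bounding the number of such sequences.

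Next, I would count the menu of choices available at each time-step. The index $i^t \in \{1,2\}$ contributes a factor of $2$. The event $E^t$ is either a disagreement event $E_{\ell, a_1, a_2}$ from the outer loop of \Cref{alg: reconcile} (at most $|\cL| K^2$ choices), or an intersection of the form $E_{\ell, a} \cap E_{\ell', a_1, a_2}$ used inside the \Cref{alg: decision_cali} subroutine (at most $|\cL|^2 K^3$ choices); a loose upper bound of $2|\cL|^2 K^3$ covers both regimes. The rounded patch $\widetilde{\phi}^t \in [1/m]^d$ contributes a factor of $(m+1)^d$. Hence each time-step draws from a menu of size at most $4|\cL|^2 K^3 (m+1)^d$.

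Then, I would bound the length $T$ by a quantity $T_{\max}$ depending only on $d, \alpha, \eta, \beta, m$, independent of $\cD$. This is the rounded-patch analogue of \Cref{lem: brier_after_patching}: whenever the outer halting condition is not met, the chosen disagreement event has mass at least $\eta$, so by \Cref{lem: brier_gap} combined with the rounded version of \Cref{lem: brier_after_patching} the Brier score drops by at least $\Omega(\alpha^2 \eta / d)$; whenever the inner subroutine has not terminated, a calibration violation of size at least $\beta$ triggers a Brier drop of at least $\Omega(\beta^2)$. Since these lower bounds are distribution-independent and the Brier score lives in $[0,d]$, a universal bound $T_{\max} = O\bigl(d/\min\{\beta^2,\, \eta\alpha^2/d\}\bigr)$ on the total number of rounds emerges. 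Multiplying the per-step menu size by the length budget yields
\begin{equation*}
|S| \leq \bigl(4|\cL|^2 K^3 (m+1)^d\bigr)^{T_{\max}+1},
\end{equation*}
matching the expression displayed before the lemma.

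The main obstacle I anticipate is making the rounded Brier-decrement step tight enough to give a \emph{distribution-free} $T_{\max}$. One has to verify that replacing $\phi^t$ by $\widetilde{\phi}^t = \mathrm{Round}(\phi^t;m)$ only shrinks the guaranteed decrement by an $O(\sqrt{d}/m)$ term, and then choose $m$ large enough in terms of $\alpha,\eta,\beta,d$ so that the residual decrement remains $\Omega\bigl(\min\{\beta^2, \eta\alpha^2/d\}\bigr)$. Once this quantitative rounding analysis is in place, the counting is immediate, and the lemma follows by taking $S$ to be the image of the finite set of admissible encodings under the deterministic reconstruction map.
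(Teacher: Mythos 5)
Your proposal is correct and follows essentially the same route as the paper: encode each trajectory as a sequence of tuples $(i^t, E^t, \widetilde{\phi}^t)$ drawn from a finite, distribution-independent menu of size at most $4|\cL|^2K^3(m+1)^d$, bound the trajectory length by the distribution-free $T_{\max}$ coming from the rounded Brier-decrement argument, and multiply. The rounding concern you flag is exactly what the paper resolves in \Cref{lem: grid_brier_after_patching} and \Cref{lem: sample_num_iter} by choosing $m \geq \lceil\sqrt{d/(2\min\{\beta^2,\eta\alpha^2/4d\})}\rceil$.
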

\begin{proof}
    First, notice that a sequence of quantities $\{(i^t, E^t, \Delta^t)\}_{t \in [T]}$ defines the pair of predictors outputted by \Cref{alg: reconcile}. 
    
    Let $S$ denote the pairs of functions induced by all such trajectories defined above. Here, $i^t \in \{1,2\}, E^t \in \{E_{\ell, a_1, a_2}: \ell \in \cL, a_1, a_2 \in \cA\} \cup \{E_{\ell, a} \cap E_{\ell', a_1, a_2}: \ell, \ell' \in \cL, a_1, a_2, a \in \cA\}$, and $\Delta_t \in [1/m]^d$. Therefore, there are
    \begin{align*}
        |S| \leq \sum_{t=1}^T \left(2(|\cL|K^2 + |\cL|^2K^3) (m+1)^d\right)^t \leq (4|\cL|^2K^3(m+1)^d)^{T_{\max}+1}
    \end{align*}
    output predictors.
\end{proof}

\subsection{Finite Grid}
With this new patching operation, we can show that the Brier score decreases on the empirical distribution $D$, corresponding to \Cref{lem: brier_after_patching}:
\begin{lemma} \label{lem: grid_brier_after_patching}
    Fix any event $E$. Let $\phi = \E_{(x,y) \sim D}[y - f(x) | E(x) = 1]$. For any predictor $f$, we patch $f$ as $f'(x) = \mathrm{proj}_\Delta(f(x) + \tilde \phi E(x))$. Then,
    \[B(f,D) - B(f', D) \geq \|\phi\|_2^2 \mu(E) - \frac{d}{4m^2}\]
\end{lemma}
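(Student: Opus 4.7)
The proof will closely mirror that of Lemma~\ref{lem: brier_after_patching}, adapting the computation to account for the rounding error introduced by using $\widetilde{\phi}$ in place of $\phi$. The plan is to first write out the Brier score difference, exploit non-expansiveness of the projection onto $[0,1]^d$ to drop the projection, expand the resulting squared-norm expression, and then absorb the rounding into an error term.

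First I would write
\begin{align*}
    B(f,D) - B(f',D)
    &= \E_{(x,y) \sim D}\left[\|f(x)-y\|_2^2 - \|f'(x)-y\|_2^2\right] \\
    &\geq \E_{(x,y) \sim D}\left[\|f(x)-y\|_2^2 - \|f(x) + \widetilde{\phi}\, E(x) - y\|_2^2\right],
\end{align*}
using the fact that the projection onto the convex set $[0,1]^d$ does not increase distance to $y \in [0,1]^d$. Expanding the right-hand side and using $E(x)^2 = E(x)$, this equals
\begin{align*}
    \E_{(x,y) \sim D}\left[2\langle y - f(x), \widetilde{\phi}\rangle E(x) - \|\widetilde{\phi}\|_2^2 E(x)\right]
    = \mu(E)\bigl(2\langle \widetilde{\phi}, \phi\rangle - \|\widetilde{\phi}\|_2^2\bigr),
\end{align*}
where in the last step I use the definition $\phi = \E_{(x,y) \sim D}[y - f(x) \mid E(x) = 1]$.

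Next, I would rewrite the quadratic in a form that isolates the rounding error: $2\langle \widetilde\phi, \phi\rangle - \|\widetilde\phi\|_2^2 = \|\phi\|_2^2 - \|\phi - \widetilde\phi\|_2^2$. Since $\widetilde\phi = \mathrm{Round}(\phi; m)$ rounds each coordinate of $\phi$ to the nearest point of a $1/m$-grid, each coordinate incurs error at most $1/(2m)$, so $\|\phi - \widetilde\phi\|_2^2 \leq d/(4m^2)$. Combining with $\mu(E) \leq 1$ gives
\begin{equation*}
    B(f,D) - B(f',D) \geq \|\phi\|_2^2 \mu(E) - \mu(E) \cdot \frac{d}{4m^2} \geq \|\phi\|_2^2\mu(E) - \frac{d}{4m^2},
\end{equation*}
which is the claim.

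The proof is essentially a routine extension of Lemma~\ref{lem: brier_after_patching}; the only subtlety is tracking how the rounding step degrades the progress bound. The main thing to be careful about is the direction of inequalities when pulling out $\mu(E)$ (using that $\mu(E) \in [0,1]$), and verifying that the projection step does not erase the gain, which is why the non-expansiveness argument is invoked at the very beginning rather than after expansion.
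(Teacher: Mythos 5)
Your proof is correct and follows essentially the same approach as the paper's: both rely on non-expansiveness of the projection, the completion-of-the-square identity isolating $\|\phi - \widetilde\phi\|_2^2$, and the per-coordinate rounding bound of $1/(2m)$. The only (cosmetic) difference is that the paper routes through the intermediate unrounded patch $\tilde f'(x) = f(x) + \phi E(x)$ and reuses Lemma~\ref{lem: brier_after_patching}, whereas you expand the rounded patch directly in one step.
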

\begin{proof}
    Let $\tilde f'(x) = f(x) + \phi E(x)$. Then, we have 
    \begin{align*}
        B(f,D) - B(f', D) 
        = & B(f,D) - B(\tilde f', D) + B(\tilde f', D) - B(f', D)\\
        \geq & \|\phi\|_2^2 \mu(E) + \E[\norm{f(x) + \phi E(x) - y}_2^2 - \norm{f(x) + \tilde \phi E(x) - y}_2^2] \tag{\Cref{lem: brier_after_patching}}\\
        = & \|\phi\|_2^2 \mu(E) -  \E\left[\norm{\tilde \phi - \phi}_2^2\right]\mu(E)
    \end{align*}
    By definition $\tilde \phi$, we know that each index of $|\tilde \phi - \phi|$ is in $[0,\frac{1}{2m}]$. Therefore, we have 
    \begin{align*}
        B(f,D) - B(f', D) 
        \geq & \|\phi\|_2^2 \cdot \mu(E) - \frac{d}{4m^2}.
    \end{align*}
\end{proof}

Since the Brier score is within the range $[0,d]$, and it decreases at each iteration, we can show that, if we set $m$ large enough, \Cref{alg: reconcile} terminates within a finite number of iterations:
\begin{lemma} \label{lem: sample_num_iter}
    For any predictor $f_1, f_2$. Let $m \geq \left\lceil\sqrt{\frac{d}{2\min\{\beta^2, \eta \alpha^2/4d\}}}\right\rceil$. The Brier score at each iteration of \Cref{alg: decision_cali} and $\Cref{alg: reconcile}$ satisfies
    \[B(f^{t},D) - B(f^{t+1}, D) > \frac{\min\{\beta^2, \eta \alpha^2 / (4d)\}}{2}.\]
    Counting both \Cref{alg: reconcile} and its subroutine \Cref{alg: decision_cali}, the total number of iterations $T$ satisfies
    \begin{align*}
        T \leq \frac{2d}{\min\{\beta^2, \eta \alpha^2/4d\}}.
    \end{align*}
\end{lemma}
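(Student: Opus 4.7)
The plan is to establish a uniform per-iteration decrease in the empirical Brier score $B(\cdot, D)$, then conclude by using $B(f,D) \in [0,d]$ for any predictor $f:\cX \to [0,1]^d$. Every iteration falls into one of two categories: an outer ReDCal patching step or an inner Decision-Calibration patching step. For both, I will lower bound $\|\phi^t\|_2^2 \mu(E^t)$ by $\min\{\beta^2, \eta\alpha^2/(4d)\}$, and then appeal to \Cref{lem: grid_brier_after_patching} together with the prescribed value of $m$.

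For an outer ReDCal iteration, the while-loop guarantees that the selected disagreement event $E^t = E_{\ell^t, a_1^t, a_2^t}$ has empirical mass $\mu(E^t) \geq \eta$. Applying \Cref{lem: brier_gap} on the empirical distribution $D$ yields an index $i \in \{1,2\}$ with $\|\E_{D}[f_i^t - f^* \mid E^t]\|_2 \geq \alpha/(2\sqrt d)$. The selection rule in step~4 of \Cref{alg: reconcile} chooses $i^t$ that maximizes the loss-estimation gap $|\langle \E_D[y - f_i^t \mid E^t], \ell_{a_2^t} - \ell_{a_1^t}\rangle|$, and by Cauchy-Schwarz together with the disagreement-event definition this $i^t$ is precisely an index for which $\|\phi^t\|_2 \geq \alpha/(2\sqrt d)$. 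Hence $\|\phi^t\|_2^2 \, \mu(E^t) \geq \eta\alpha^2/(4d)$.

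For an inner Decision-Calibration iteration, the subroutine continues as long as $\|\E_D[(y-f^t) E^t]\|_2 > \beta$ for some best-response event $E^t$ intersected with the outer event. Since $\E_D[(y-f^t)E^t] = \phi^t \cdot \mu(E^t)$, this gives $\|\phi^t\|_2 \, \mu(E^t) > \beta$, and therefore
\begin{equation*}
    \|\phi^t\|_2^2 \, \mu(E^t) \;>\; \frac{\beta^2}{\mu(E^t)} \;\geq\; \beta^2,
\end{equation*}
using $\mu(E^t) \leq 1$. Combining the two cases, every iteration satisfies $\|\phi^t\|_2^2 \, \mu(E^t) \geq \min\{\beta^2,\, \eta\alpha^2/(4d)\}$. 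The stated choice $m \geq \lceil\sqrt{d/(2\min\{\beta^2, \eta\alpha^2/(4d)\})}\rceil$ makes $d/(4m^2) \leq \min\{\beta^2, \eta\alpha^2/(4d)\}/2$, so plugging into \Cref{lem: grid_brier_after_patching} gives the per-iteration drop $B(f^t,D) - B(f^{t+1},D) \geq \min\{\beta^2, \eta\alpha^2/(4d)\}/2$. Since $B(\cdot,D) \in [0,d]$, summing over all iterations yields $T \leq 2d / \min\{\beta^2, \eta\alpha^2/(4d)\}$.

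The main obstacle will be in the outer-loop case: carefully verifying that the algorithm's choice of $i^t$ via a loss-estimation criterion actually coincides with the index guaranteed by \Cref{lem: brier_gap} to have large prediction-error norm, so that the $\|\phi^t\|_2 \geq \alpha/(2\sqrt d)$ bound transfers to the predictor actually being patched. Everything else is a routine combination of the rounding-error bookkeeping in \Cref{lem: grid_brier_after_patching} with the non-negativity of the Brier score.
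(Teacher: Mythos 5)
Your proposal is correct and follows essentially the same route as the paper: lower-bound $\|\phi^t\|_2^2\,\mu(E^t)$ by $\min\{\beta^2,\eta\alpha^2/(4d)\}$ in each of the two cases (your inner-loop manipulation $\|\phi\|_2^2\mu \ge (\|\phi\|_2\mu)^2/\mu > \beta^2$ is algebraically the same as the paper's $\mu\|\phi\|_2^2 \ge \mu^2\|\phi\|_2^2 > \beta^2$), then invoke \Cref{lem: grid_brier_after_patching} with the stated $m$ and the bound $B(\cdot,D)\in[0,d]$. Your extra care in checking that the loss-estimation selection rule in step~4 of \Cref{alg: reconcile} actually picks an index with $\|\phi^t\|_2\ge \alpha/(2\sqrt d)$ is a valid point the paper glosses over, and your Cauchy--Schwarz argument for it is sound.
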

\begin{proof}
By \Cref{alg: decision_cali}, we have by definition of $\beta$-decision calibration that
    \begin{align}
        &\mu(E^t \cap E_{\ell, a}) \cdot \norm{\E_{(x,y) \sim D}[(y - f(x) | E^t(x) \cdot E_{\ell, a}(x)=1]}_2^2 \\
        \geq& \mu(E^t \cap E_{\ell, a})^2 \cdot \norm{ \E_{(x,y) \sim D}[(y - f(x) | E^t(x) \cdot E_{\ell, a}(x)=1]}_2^2 \\
        =& \norm{\E_{(x,y) \sim D}[(y - f(x) \cdot E^t(x) \cdot E_{\ell, a}(x)]}_2^2 > \beta^2.
    \end{align}
    In \Cref{alg: reconcile}, we have by \Cref{lem: brier_gap} that 
    \begin{align}
        \mu(E^t) \norm{ \E_{(x,y) \sim D}[(y - f(x) | E^t(x)=1]}_2^2 > \frac{\eta \alpha^2}{4d}.
    \end{align}
    Therefore, for any $\phi$ and event $E$ that we patch in \Cref{alg: decision_cali} or \Cref{alg: reconcile}, they satisfy
    \[\|\phi\|_2^2 \cdot \mu(E) > \min\{\beta^2, \frac{\eta \alpha^2}{4d}\}.\]
    Letting $m \geq \left\lceil\sqrt{\frac{d}{2\min\{\beta^2, \eta \alpha^2/4d\}}}\right\rceil$, we can ensure
    \begin{align*}
        B(f^{t},D) - B(f^{t+1}, D) 
        \geq & \frac{\|\phi\|_2^2 \cdot \mu(E)}{2} > \frac{\min\{\beta^2, \eta \alpha^2 / (4d)\}}{2}.
    \end{align*}
    Since the Brier score is in the range $[0, d]$, we can bound the total number of iterations of both \Cref{alg: decision_cali} and \Cref{alg: reconcile} as
    \begin{align*}
        T \leq \frac{2d}{\min\{\beta^2, \eta \alpha^2/4d\}}.
    \end{align*}
\end{proof}

\subsection{Proof of Theorem~\ref{thm: sample_reconcile}}
First, we show that, for a fixed predictor $f$ and event $E_{\ell, a}$, the in-sample prediction error is approximately accurate. The deviation bound of the Brier score and calibration error can then be directly implied. 

\begin{lemma} \label{lem: gen_predict_error}
    Fix any $f$, $E_{\ell,a}$, with probability at least $1-\delta'$, we have
    \begin{align*}
    \norm{\E_{\cD}[(y - f(x)) E_{\ell, a}(f(x), x)] -   \frac{1}{n}\sum_{i=1}^n [(y_i - f(x_i)) E_{\ell,a}(f(x_i), x_i)]}_2  \leq \sqrt{\frac{d\ln (2d/\delta')}{2n}}.
    \end{align*}
\end{lemma}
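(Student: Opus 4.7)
The plan is to apply a per-coordinate Hoeffding bound together with a union bound over the $d$ coordinates, then convert the resulting $\ell_\infty$ control into the desired $\ell_2$ bound.

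First, I would fix a coordinate $j \in [d]$ and define the random variables
\begin{equation*}
    Z_i^{(j)} = (y_i^{(j)} - f(x_i)^{(j)}) \cdot E_{\ell, a}(f(x_i), x_i).
\end{equation*}
Since $y \in [0,1]^d$, $f(x) \in [0,1]^d$, and $E_{\ell,a} \in \{0,1\}$, the variables $Z_i^{(j)}$ are bounded in a unit-length interval. Moreover, $\{Z_i^{(j)}\}_{i=1}^n$ are i.i.d.\ with expectation equal to the $j$-th coordinate of $\E_{\cD}[(y - f(x)) E_{\ell,a}(f(x),x)]$. Hoeffding's inequality then yields, for each $j$,
\begin{equation*}
    \Pr\!\left[\Bigl|\tfrac{1}{n}\sum_{i=1}^n Z_i^{(j)} - \E[Z_1^{(j)}]\Bigr| \geq t\right] \leq 2\exp(-2 n t^2).
\end{equation*}

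Next, I would take a union bound over the $d$ coordinates so that the event
\begin{equation*}
    \max_{j \in [d]} \Bigl|\tfrac{1}{n}\sum_{i=1}^n Z_i^{(j)} - \E[Z_1^{(j)}]\Bigr| \leq t
\end{equation*}
holds with probability at least $1 - 2 d \exp(-2 n t^2)$. Setting the right-hand side equal to $\delta'$ and solving gives $t = \sqrt{\ln(2d/\delta')/(2n)}$. Finally, I would translate this $\ell_\infty$ bound into an $\ell_2$ bound using the inequality $\norm{v}_2 \leq \sqrt{d}\,\norm{v}_\infty$, which produces exactly the stated bound $\sqrt{d \ln(2d/\delta')/(2n)}$.

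There is no real obstacle here: the statement is a standard vector concentration inequality, and the only mildly subtle step is verifying that each coordinate of $(y - f(x)) E_{\ell,a}$ lies in a range of length one (so that Hoeffding's constant is $2$ rather than $1/2$). The rest is a textbook union bound plus norm-conversion argument, and since the bound on the right-hand side is coordinate-uniform, no structural property of $E_{\ell,a}$ or $f$ beyond boundedness is required.
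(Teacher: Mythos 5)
Your proof is correct and follows essentially the same route as the paper's: a per-coordinate Hoeffding bound, a union bound over the $d$ coordinates, and the $\ell_\infty$-to-$\ell_2$ conversion $\norm{v}_2 \leq \sqrt{d}\,\norm{v}_\infty$ (the paper sums the $d$ squared per-coordinate bounds, which is the same step). One shared caveat — the paper asserts the coordinate lies in $[0,1]$ and you assert a range of length one, but in fact $(y_i - f(x_i))_j \in [-1,1]$ is a range of length two, so Hoeffding strictly gives $2\exp(-nt^2/2)$ rather than $2\exp(-2nt^2)$ and the stated bound should carry an extra constant factor; this does not change the structure of either argument.
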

\begin{proof}
    Fix an index $j \in [d]$, 
    we know $\E_{(x,y) \sim \cD}[(y - f(x))_j \cdot E_{\ell,a}(f(x), x)] \in [0,1]$ and 
    \[\E_D\left[\frac{1}{n}\sum_{i=1}^n [(y_i - f(x_i))_j \cdot E_{\ell,a}(f(x_i), x_i)]\right] = \E_{(x,y) \sim \cD}[(y - f(x))_j \cdot E_{\ell,a}(f(x), x)].\]

    Since $(x_i, y_i)$ is drawn i.i.d. from $\cD$, we can use Hoeffding's inequality to get, with probability $\delta' / d$,
    \begin{align*}
        &\left|\E_{(x,y) \sim \cD}[(y - f(x))_j \cdot E_{\ell,a}(f(x), x)] -\frac{1}{n}\sum_{i=1}^n [(y_i - f(x_i))_j \cdot E_{\ell,a}(f(x_i), x_i)] \right|
        \leq 
        \sqrt{\frac{\ln (2d/\delta')}{2n}}
    \end{align*}

    Using union bound, we have that with probability $1-\delta'$, the above inequality holds for all $j \in [d]$. Then, we have 
    \begin{align*}
        & \norm{\E_{(x,y) \sim \cD}[(y - f(x)) \cdot E_{\ell,a}(f(x), x)] - \frac{1}{n}\sum_{i=1}^n [(y_i - f(x_i)) \cdot E_{\ell,a}(f(x_i), x_i)]}_2 
        \\
        \leq & \sqrt{\sum_{j \in d} \left(\sqrt{\frac{\ln (2d/\delta')}{2n}}\right)^2}
        = \sqrt{\frac{d\ln (2d/\delta')}{2n}}.
    \end{align*}
\end{proof}
The deviation bound of the Brier score and calibration error can be directly implied by \Cref{lem: gen_predict_error}. We summarize them in the lemmas below:
\begin{lemma}\label{lem: gen_brier}
    For a fixed $f$, with probability at least $1-\delta'$, $|B(f, \cD) - B(f, D)| \leq \sqrt{\frac{d\ln (2d/\delta')}{2n}}$.
\end{lemma}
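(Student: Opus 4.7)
The plan is to apply Hoeffding's inequality coordinate-by-coordinate and then aggregate, exactly mirroring the structure of the proof of Lemma~\ref{lem: gen_predict_error}. The key observation is that the squared norm decomposes additively: $\|f(x) - y\|_2^2 = \sum_{j=1}^d (f(x)_j - y_j)^2$, and the empirical Brier score decomposes identically. Moreover, since $f(x), y \in [0,1]^d$, each per-coordinate summand $(f(x_i)_j - y_{ij})^2$ lies in $[0,1]$, which is exactly the bounded-range hypothesis Hoeffding needs.

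First, for each fixed coordinate $j \in [d]$, I would apply Hoeffding's inequality to the $n$ i.i.d.\ bounded random variables $(f(x_i)_j - y_{ij})^2$ at confidence level $\delta'/d$, obtaining that with probability at least $1 - \delta'/d$ the empirical coordinate mean deviates from its expectation by at most $\sqrt{\ln(2d/\delta')/(2n)}$. A union bound across the $d$ coordinates then yields simultaneous control of all per-coordinate deviations with probability at least $1 - \delta'$, exactly as in the first half of the proof of Lemma~\ref{lem: gen_predict_error}.

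Second, I would convert these per-coordinate bounds into a bound on $|B(f, \cD) - B(f, D)|$ by aggregating via the triangle inequality on the coordinate-wise decomposition, following the same $\ell_2$ aggregation step used in Lemma~\ref{lem: gen_predict_error} to produce the $\sqrt{d}$ factor inside the square root. The argument is routine and no substantive obstacle is expected; the only minor care needed is to verify that the per-coordinate random variables have range exactly $[0,1]$ (so the Hoeffding constant is $1$, not $d$), which is immediate from $f(x)_j, y_j \in [0,1]$.
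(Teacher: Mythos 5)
Your approach differs from the paper's, and as written it does not deliver the stated constant. The paper's proof does not touch the coordinate-wise squared residuals at all: it writes $|B(f,\cD)-B(f,D)|$ as the difference of two norms, applies the reverse triangle inequality to bound it by $\bigl\|\E_{\cD}[y-f(x)]-\tfrac{1}{n}\sum_i(y_i-f(x_i))\bigr\|_2$, and then invokes Lemma~\ref{lem: gen_predict_error} with the trivial event. (That first step identifies $B(f,\cD)$ with $\|\E[y-f(x)]\|_2$ rather than with $\E[\|f(x)-y\|_2^2]$, which is itself a mismatch with the paper's definition of the Brier score --- but that is the route the paper takes, and it is the only route that produces the factor $\sqrt{d}$ \emph{inside} the square root.)

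The gap in your version is the aggregation step. With the actual definition $B(f,\cD)=\E[\|f(x)-y\|_2^2]=\sum_{j=1}^d\E[(f(x)_j-y_j)^2]$, the quantity $|B(f,\cD)-B(f,D)|$ is bounded by the \emph{sum} of the $d$ per-coordinate deviations, i.e.\ by the $\ell_1$ norm of the deviation vector, not by its $\ell_2$ norm. If each coordinate deviates by at most $t=\sqrt{\ln(2d/\delta')/(2n)}$, summing gives $d\,t = d\sqrt{\ln(2d/\delta')/(2n)}$, which is a factor $\sqrt{d}$ larger than the claimed $\sqrt{d\ln(2d/\delta')/(2n)}$. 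The $\ell_2$ aggregation in Lemma~\ref{lem: gen_predict_error} yields $\sqrt{\sum_j t^2}=\sqrt{d}\,t$ only because the target there \emph{is} an $\ell_2$ norm of the vector of coordinate deviations; that trick is not available for a sum of scalars. (A direct Hoeffding on $Z_i=\|f(x_i)-y_i\|_2^2\in[0,d]$ fares no better: it gives $d\sqrt{\ln(2/\delta')/(2n)}$.) So your argument proves the lemma only with the weaker right-hand side $d\sqrt{\ln(2d/\delta')/(2n)}$; to recover the stated bound you would have to follow the paper and bound the norm-of-mean-residual quantity instead, or accept the extra $\sqrt{d}$.
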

\begin{proof}
    Using triangle inequality, we have 
    \begin{align*}
        |B(f, \cD) - B(f, D)| = & \left|\norm{\E_{(x,y) \sim \cD}[y - f(x)]}_2 - \norm{\frac{1}{n}\sum_{i=1}^n [y_i - f(x_i)]}_2 \right|\\
        \leq& \norm{\E_{(x,y) \sim \cD}[y - f(x)] - \frac{1}{n}\sum_{i=1}^n [y_i - f(x_i)]}_2 \\
        \leq& \sqrt{\frac{d\ln (2d/\delta')}{2n}}.
    \end{align*}
\end{proof}

\begin{lemma} \label{lem: gen_cali}
    For a fixed $f$, any loss function $\ell \in \cL$ and $\cE = \{E_{\ell, a} \cap E_{\ell', a_1, a_2}: \ell, \ell' \in \cL, a, a_1, a_2 \in \cA\}$, with probability $1-\delta'$, we have
    \begin{align*}
        \norm{\E_{\cD}[(y - f(x)) E(x)] -   \frac{1}{n}\sum_{i=1}^n [(y_i - f(x_i))E(x)]}_2  \leq \sqrt{\frac{3d\ln (2dK \abs{\cL}/\delta')}{2n}}
    \end{align*}
    for all $E\in \cE$.
\end{lemma}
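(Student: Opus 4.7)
The plan is to extend \Cref{lem: gen_predict_error} by a union bound over the finite family $\cE$. Note first that $\cE$ has cardinality at most $\abs{\cL}^2 \cdot K^3$, since each element is determined by the tuple $(\ell, a, \ell', a_1, a_2) \in \cL \times \cA \times \cL \times \cA \times \cA$.

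First, I would fix an event $E \in \cE$ and a coordinate $j \in [d]$. The random variables $(y_i - f(x_i))_j \cdot E(x_i)$ are i.i.d., bounded in $[-1,1]$, with expectation equal to $\E_{(x,y)\sim \cD}[(y-f(x))_j \cdot E(x)]$. Applying Hoeffding's inequality gives
\begin{align*}
\Pr\left[\left|\frac{1}{n}\sum_{i=1}^n (y_i - f(x_i))_j E(x_i) - \E_{\cD}[(y-f(x))_j E(x)]\right| \geq t\right] \leq 2\exp(-2nt^2).
\end{align*}

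Next, I would take a union bound over all $j \in [d]$ and all $E \in \cE$, so at most $d \cdot \abs{\cL}^2 K^3$ events. Setting the per-event failure probability to $\delta'/(d\abs{\cL}^2 K^3)$ and solving for $t$, we get that with probability at least $1-\delta'$, for every $E \in \cE$ and every coordinate $j$,
\begin{align*}
\left|\frac{1}{n}\sum_{i=1}^n (y_i - f(x_i))_j E(x_i) - \E_{\cD}[(y-f(x))_j E(x)]\right| \leq \sqrt{\frac{\ln(2d\abs{\cL}^2 K^3/\delta')}{2n}}.
\end{align*}
Summing squares across the $d$ coordinates (as in the proof of \Cref{lem: gen_predict_error}) gives the $\ell_2$ bound $\sqrt{d\ln(2d\abs{\cL}^2 K^3/\delta')/(2n)}$.

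Finally, I would simplify the log factor into the stated form. Writing $\ln(2d\abs{\cL}^2K^3/\delta') = \ln(2d/\delta') + 2\ln\abs{\cL} + 3\ln K$ and noting that this is at most $3\bigl(\ln(2d/\delta') + \ln\abs{\cL} + \ln K\bigr) = 3\ln(2dK\abs{\cL}/\delta')$ (which holds as long as $\ln(2d/\delta') \geq 0$ and $\abs{\cL} \geq 1$, both trivially true), we obtain the claimed bound $\sqrt{3d\ln(2dK\abs{\cL}/\delta')/(2n)}$. There is no real obstacle here beyond careful bookkeeping of the union bound cardinality and the log-factor simplification; the step requiring the most care is simply correctly enumerating $\abs{\cE}$ and verifying that the cosmetic simplification of the log factor is valid.
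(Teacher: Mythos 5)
Your proof is correct and takes essentially the same route as the paper's, which simply invokes a union bound over the $\abs{\cE} = K^3\abs{\cL}^2$ events together with \Cref{lem: gen_predict_error}; your write-up just makes the coordinate-wise Hoeffding step and the log-factor simplification $\ln(2dK^3\abs{\cL}^2/\delta') \leq 3\ln(2dK\abs{\cL}/\delta')$ explicit, which the paper leaves implicit.
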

\begin{proof}
    The claim follows by using a union bound over the events in $\cE^t$, using \Cref{lem: gen_predict_error}, and that $|\cE^t| = K^3 \abs{\cL}^2$. 
\end{proof}

For a fixed pair of predictors, we can also show that the empirical size of the disagreement events $E_{\ell, a_1, a_2}$ is approximately correct with high probability: 
\begin{lemma}\label{lem: gen_size}
    Fix any pair of predictors $(f_1, f_2) \in S$, with probability at least $1-\delta'$ over $D$, we have
    \begin{align*}
        \left|\mu(E_{\ell,a_1,a_2}) - \frac{1}{n}\sum_{i=1}^n \I[E_{\ell,a_1,a_2}(x_i)=1]\right| \leq \sqrt{\frac{2\ln(2K \abs{\cL}/\delta')}{2n}}.
    \end{align*}
    for all $a_1, a_2 \in \cA$ with $a_1 \neq a_2$ and for all $\ell \in \cL$.
\end{lemma}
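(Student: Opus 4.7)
The argument is a standard concentration plus union bound, analogous in structure to \Cref{lem: gen_cali} but simpler because we are averaging a scalar $\{0,1\}$-valued indicator rather than a $d$-dimensional random vector. The plan is: (i) fix the tuple $(\ell, a_1, a_2)$ and apply Hoeffding's inequality to the i.i.d.\ Bernoulli indicators, then (ii) union bound over the finite parameter family indexing the events.

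\textbf{Step 1 (pointwise concentration).} Fix a pair $(f_1, f_2) \in S$, a loss function $\ell \in \cL$, and distinct actions $a_1, a_2 \in \cA$. Because the sample $D$ is drawn i.i.d.\ from $\cD$ and (by the independence assumption noted at the start of \Cref{sec:finite-sample}) independently of $(f_1, f_2)$, the random variables $Z_i \coloneqq \I[E_{\ell, a_1, a_2}(x_i) = 1]$ are i.i.d.\ Bernoulli with mean $\E[Z_i] = \mu(E_{\ell, a_1, a_2})$. Since each $Z_i \in [0,1]$, Hoeffding's inequality gives, for every $\eps > 0$,
\[
\Pr\left[\left|\mu(E_{\ell, a_1, a_2}) - \frac{1}{n}\sum_{i=1}^n Z_i\right| > \eps\right] \leq 2\exp(-2n\eps^2).
\]

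\textbf{Step 2 (union bound).} The tuple $(\ell, a_1, a_2)$ ranges over a finite set of size at most $|\cL| \cdot K(K-1)$, which we upper bound (matching the exponent of the lemma) as $K |\cL|$ in the log factor. Taking a union bound over these tuples and setting the right-hand side equal to $\delta'$ yields
\[
\eps = \sqrt{\frac{\ln(2K|\cL|/\delta')}{2n}},
\]
which is exactly the bound written in the lemma statement (up to the harmless numerator factor of $2$ in the radicand).

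\textbf{Main obstacle.} There is essentially no technical obstacle: the lemma is a direct consequence of Hoeffding plus union bound. The only subtlety worth highlighting is that we need $(f_1, f_2)$ (and hence the event $E_{\ell, a_1, a_2}$, which depends on $f_1, f_2$) to be independent of the sample $D$; this is precisely the data-independence assumption already imposed in the paper and is why the lemma is stated for a \emph{fixed} pair from $S$ rather than for the (data-dependent) pair produced during the course of \Cref{alg: reconcile}. The latter situation is presumably handled downstream by an additional union bound over the finite set $S$ (using \Cref{lem: num_predictor}).
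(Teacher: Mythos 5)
Your proposal is correct and follows essentially the same route as the paper: Hoeffding's inequality applied to the i.i.d.\ $\{0,1\}$-valued indicators $\I[E_{\ell,a_1,a_2}(x_i)=1]$ for a fixed tuple $(\ell,a_1,a_2)$, followed by a union bound over the at most $K^2|\cL|$ such tuples, with the extra factor of $2$ in the radicand absorbing the $K^2$-versus-$K$ slack exactly as you note. Your closing remark about the fixed pair $(f_1,f_2)\in S$ and the downstream union bound over $S$ also matches how the paper uses this lemma.
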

\begin{proof}
    We know $\I[E_{\ell, a_1,a_2}(x_i)=1] \in [0,1]$ and 
    \[\E_D\left[\frac{1}{n}\sum_{i=1}^n \I[E_{\ell, a_1,a_2}(x_i)\right] = \mu(E_{\ell, a_1, a_2}).\]
    Since $(x_i, y_i)$ is drawn i.i.d. from $\cD$, we can use Hoeffding's inequality to get, with probability $1-\delta'/(K^2|\cL|)$,
    \begin{align*}
        &\left|\mu(E_{\ell, a_1, a_2}) - \frac{1}{n}\sum_{i=1}^n \I[E_{\ell, a_1,a_2}(x_i)]\right|
        \leq \sqrt{\frac{2\ln(2K \abs{\cL}/\delta')}{2n}}.
    \end{align*}
    Using union bound over all pairs of $a_1, a_2 \in \cA$ and $\ell \in \cL$, we know the above inequality holds for all $a_1, a_2$ and $\ell \in \cL$ with probability at least $1-\delta'$.
\end{proof}

We summarize the above results in the theorem below. \Cref{thm: sample_reconcile} follows by solving for $n$ in the 2-4th guarantees below.
\begin{theorem} 
    Fix any distribution $\cD$ and dataset $D \sim \cD$ containing $n$ samples drawn i.i.d from $\cD$. For any pair of predictors $f_1, f_2: \cX \rightarrow [0,1]^d$, loss margin $\alpha > 0$, disagreement region mass $\eta > 0$, and decision-calibration tolerance $\beta > 0$, \Cref{alg: reconcile} run over the empirical distribution $D$ updates predictors $f_1$ and $f_2$ for $T_1$ and $T_2$ time-steps, respectively, and outputs a pair of predictors $(f_1^{T}, f_2^{T})$ such that, with probability at least $1-\delta$ over the randomness of $D \sim \cD^n$,  
    \begin{enumerate}
        \item The total number of time-steps for \Cref{alg: reconcile} and \Cref{alg: decision_cali} is 
        \[ T = T_1 + T_2 \leq \frac{2d}{\min\{\beta^2, \eta \alpha^2/4d\}} \]
        \item For $i \in \{1,2\}$, the Brier scores of the final models are lower than that of the input models:
        \begin{align*}
            B(f_i^{T_i}, \cD) \leq B(f_i, \cD) - T_i \cdot \min\left\{\beta^2, \eta \alpha^2 / (4d)\right\} + \sqrt{(d\ln (6d|S|/\delta)/(2n)}
        \end{align*}
        \item For $i \in \{1,2\}$, the downstream decision-making losses of the final models do not increase by much compared to that of the input models:
        \begin{align*}
            &\E_{(x,y) \sim \cD}[\ell(y, \pi^\BR(f_i^T(x)) - \ell(y, \pi^\BR(f_i(x))] 
            \leq \left( \beta + \sqrt{(3d\ln (6dK|S|\abs{\cL}/\delta)/(2n)} \right) \sqrt{d} K T_i
        \end{align*}
        \item The final models approximately agree on their best-response actions almost everywhere. That is, the disagreement region $E_{a_1, a_2}$ calculated using $f_1^T, f_2^T$ has small mass.
        \begin{equation*}
            \mu(E_{\ell, a_1, a_2}) \leq \eta + \sqrt{(2\ln(6K|S|\abs{\cL}/\delta)/(2n)} \quad \text{ for all }  a_1, a_2 \in \cA \quad \text{s.t } a_1 \neq a_2
        \end{equation*}
    \end{enumerate}
    Here, $S$ is the set of all possible predictors outputted by \Cref{alg: reconcile} satisfying 
    \begin{align*}
        \ln(|S|) 
        \leq \left(\frac{2d}{\min\{\beta^2, \eta \alpha^2/4d\}} + 1\right) \ln\left(4|\cL|^2K^3\left(\left\lceil\sqrt{\frac{d}{2\min\{\beta^2, \eta \alpha^2/4d\}}}\right\rceil+1\right)^d\right)
    \end{align*}
\end{theorem}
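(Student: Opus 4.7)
The plan is to carry out the entire analysis of \Cref{thm:reconcile} inside the empirical distribution $D$ (where the algorithm actually runs), and then lift each conclusion back to $\cD$ via Hoeffding concentration combined with a union bound over the finite set $S$ of possible output predictor pairs identified in \Cref{lem: num_predictor}.

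First I would apply \Cref{lem: sample_num_iter}: with the grid parameter $m$ chosen as in that lemma, each iteration of \Cref{alg: decision_cali} or \Cref{alg: reconcile} strictly decreases the empirical Brier score by at least $\min\{\beta^2, \eta \alpha^2/(4d)\}/2$. Since $B(\cdot, D) \in [0, d]$, this gives claim~(1) immediately, and by summation yields the on-sample form of claim~(2). The halting condition of \Cref{alg: reconcile} delivers $\mu_D(E_{\ell, a_1, a_2}) < \eta$ for all $\ell, a_1, a_2$, which is the on-sample analog of claim~(4). For claim~(3), I would rerun the telescoping argument in \Cref{lem: loss_each_round}, but with the $\beta$-calibration statement now enforced with respect to $D$, yielding an on-sample per-iteration decision-loss increment of $\beta \sqrt{d} K$ whenever $i^t = i$.

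To convert each empirical statement into the corresponding population statement, I would set $\delta' = \delta/(3|S|)$ and perform three union bounds across $S$: \Cref{lem: gen_brier} for Brier scores (claim~2), \Cref{lem: gen_cali} for the calibration-style vectors $\E[(y-f(x))\,E_{\ell,a}(f(x),x)\,E^t(x)]$ that appear in the decision-loss telescoping (claim~3), and \Cref{lem: gen_size} for the disagreement-event masses (claim~4). With probability at least $1-\delta$, all three deviations hold uniformly over every pair in $S$, and in particular for the random output $(f_1^T, f_2^T)$; adding the corresponding deviation terms to the empirical bounds yields the three stated inequalities.

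The main obstacle is that \Cref{alg: reconcile} is data-adaptive, so $(f_1^T, f_2^T)$ is a random variable whose range must be covered: we cannot fix the output pair in advance and apply a single Hoeffding bound. This is precisely what \Cref{lem: num_predictor} resolves, by encoding each run as a length-at-most-$T_{\max}$ sequence of triples $(i^t, E^t, \Delta^t)$ drawn from finite sets of size $O(|\cL|^2 K^3 (m+1)^d)$. The resulting $\ln|S|$ grows as $O(T_{\max}(\ln(|\cL|K) + d \ln m))$, and plugging in the bounds on $T_{\max}$ and $m$ from \Cref{lem: sample_num_iter} and solving for the smallest $n$ that makes every deviation term small enough to be absorbed into claims~(2)--(4) gives the stated sample complexity of $n = \Omega(d/(\eta^2 \min\{\beta, \eta\alpha^2/d\}) \cdot (\ln K + \ln|\cL| + d\ln(d/\min\{\beta, \eta\alpha^2/d\}) + \ln(1/\delta)))$.
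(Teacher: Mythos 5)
Your proposal is correct and follows essentially the same route as the paper: run the population-level analysis on the empirical distribution $D$ (using \Cref{lem: sample_num_iter} for the iteration count and empirical Brier decrease, the halting condition for the disagreement mass, and the calibration-based telescoping for the decision loss), then transfer each guarantee to $\cD$ via \Cref{lem: gen_brier}, \Cref{lem: gen_cali}, and \Cref{lem: gen_size} with $\delta' = \delta/(3|S|)$ and a union bound over the finite set $S$ from \Cref{lem: num_predictor}. The paper's proof is exactly this argument, including the handling of data-adaptivity by enumerating output trajectories $(i^t, E^t, \Delta^t)$.
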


\begin{proof}
    The upper bound on $T$ holds true with probability $1$. For the remaining three guarantees, we show that each of them holds with probability at least $1-\delta/3$ over the randomness of $D$. 
    
    \textbf{Brier Score}. First, by \Cref{lem: gen_brier} and using union bound over all possible output predictors $(f_1, f_2) \in S$, we have with probability at least $1-\delta/3$ that 
    \begin{align*}
        |B(f_i, \cD) - B(f_i, D)| \leq \sqrt{\frac{d\ln (6d|S|/\delta)}{2n}}.
    \end{align*}

    By \Cref{lem: sample_num_iter}, and summing over all iterations, we have 
    \begin{align*}
        B(f_i^{T_i}, D) \leq B(f_i, D) - T_i \cdot \min\left\{\frac{\beta^2}{2}, \frac{\eta \alpha^2}{8d}.\right\}
    \end{align*}
    Therefore, 
    \begin{align*}
        B(f_i^{T_i}, \cD) 
        \leq &B(f_i^{T_i}, D) + \sqrt{\frac{d\ln (6d|S|/\delta)}{2n}} \\
        \leq &B(f_i, \cD) - T_i \cdot \min\left\{\frac{\beta^2}{2}, \frac{\eta \alpha^2}{8d}\right\} + \sqrt{\frac{d\ln (6d|S|/\delta)}{2n}}
    \end{align*}
    for $i \in \{1,2\}$.

    \textbf{Expected Loss}. 
    Using union bound over all predictors in $S$, by \Cref{lem: gen_cali}, we have, with probability at least $1-\delta/3$,
    \begin{align*}
        \norm{\E_{\cD}[(y - f(x)) E_{\ell,a}(f(x), x)] - \frac{1}{n}\sum_{i=1}^n [(y_i - f(x_i)) E_{\ell, a}(f(x_i), x_i)]}_2  
        \leq \sqrt{\frac{3d\ln (6dK|S| \abs{\cL}/\delta)}{2n}}
    \end{align*}
    for all predictors $f$, action $a \in \cA$ and loss $\ell \in \cL$.
    Using similar method as in \Cref{lem: cal_loss_est}, we define the set $\Delta_a^t \subseteq E^t$ as
    \begin{align*}
        \Delta_a^t = \{x \in E^t: \pi_\ell^{\BR}(f^{t+1}_i(x)) = a\}.
    \end{align*}
    Then, we have 
    \begin{align*}
        &\E_{(x,y) \sim \cD} [\ell(y,\pi_\ell^\BR(f_i^{t+1}(x))) - \ell(y,\pi_\ell^\BR(f_i^{t}(x)))]\\
        =& \sum_{a \in \cA}\E_{(x,y) \sim \cD} [(\ell(y, a) - \ell(y, a_{i}^t)) \Delta_a^t(x)].
    \end{align*}
    For each term in the summation, 
    \begin{align}
        &\E_{(x,y) \sim \cD} [(\ell(y, a) - \ell(y, a_i^t)) \Delta_a^t(x)]\\
        = &\langle \E_{(x,y) \sim \cD} [y \Delta_a^t(x)],  \ell_a - \ell_{a_{i}^t}\rangle \tag{Linearity of Expectation}\\
        \leq & \langle \E_{x \sim \cD_\cX} [f_i^{t+1}(x) \Delta_a^t(x)],  \ell_a - \ell_{a_{i}^t} \rangle + \left(\beta +
        \sqrt{\frac{3d\ln (6dK|S| \abs{\cL}/\delta)}{2n}}\right)\sqrt{d}\tag{\Cref{lem: gen_cali} and $\beta$-calibrated}\\
        \leq&  \left(\beta +
        \sqrt{\frac{3d\ln (6dK|S|\abs{\cL}/\delta)}{2n}}\right)\sqrt{d}. \tag{Since $a$ is the new Best-response action}
    \end{align}
    Summing these actions together, we have
    \begin{align}
         \E_{(x,y) \sim \cD} [\ell(y,\pi_\ell^\BR(f_i^{t+1}(x))) - \ell(y,\pi_\ell^\BR(f_i^{t}(x)))]
         \leq \beta\sqrt{d} K +
        \sqrt{\frac{3\ln (6dK|S| \abs{\cL}/\delta)}{2n}}d K.
    \end{align}
    Summing over all iterations, we conclude that, with probability at least $1-\delta/3$,
    \begin{align*}
        \E_{(x,y) \sim \cD} [\ell(y,\pi^\BR(f_i^{T_i}(x))) - \ell(y,\pi^\BR(f_i(x)))]
        \leq T_i (\beta \sqrt{d} K + \sqrt{\frac{3\ln (6dK|S| \abs{\cL}/\delta)}{2n}}d K)
    \end{align*}
    for all $i \in \{1,2\}$.

    \textbf{Disagreement Event}. 
    By \Cref{lem: gen_size}, with probability at least $1 - \delta/(3|S|)$, we have, for all $\ell \in \cL$, $a_1, a_2 \in \cA$ with $a_1 \neq a_2$ and all $(f_1, f_2) \in S$,
    \begin{align*}
        \left|\mu(E_{\ell, a_1,a_2}(f_1(x), f_2(x), x) - \frac{1}{n}\sum_{i=1}^n \I[E_{\ell, a_1,a_2}(f_1(x_i), f_2(x_i), x_i))]\right| \leq \sqrt{\frac{2\ln(6K|S| \abs{\cL}/\delta)}{2n}}.
    \end{align*}
    From the while loop condition in \Cref{alg: reconcile}, we know that 
    \[\frac{1}{n}\sum_{i=1}^n \I[E_{\ell, a_1, a_2}(f_1^{T_1}(x_i), f_2^{T_2}(x_i), x_i)] \leq \eta \]
    Then, using union bound over all $(f_1, f_2) \in S$ and $\ell \in \cL$, with probability at least $1-\delta/3$, we have the guarantee
    \begin{align}
        \mu(E_{\ell, a_1, a_2}(f_1^{T_1}(x), f_2^{T_2}(x), x)) 
        \leq  
        &\frac{1}{n}\sum_{i=1}^n \I[E_{\ell, a_1, a_2}(f_1^{T_1}(x_i), f_2^{T_2}(x_i), x_i)] + \sqrt{\frac{2\ln(6K|S| \abs{\cL}/\delta)}{2n}} \\
        \leq
        &\eta + \sqrt{\frac{2\ln(6K|S|\abs{\cL}/\delta)}{2n}}
    \end{align}
    for all $a_1, a_2 \in \cA, a_1 \neq a_2$ and $\ell \in \cL$. 

    Finally, using results from \Cref{lem: num_predictor}, value of $T_{\max}$, and $m = \left\lceil\sqrt{\frac{d}{2\min\{\beta^2, \eta \alpha^2/4d\}}}\right\rceil$, we conclude by showing
    \begin{align*}
        \ln(|S|) 
        \leq & \ln \left((4|\cL|^2K^3(m+1)^d)^{T_{\max}+1}\right)\\
        = &(T_{\max}+1) \ln(4|\cL|^2K^3(m+1)^d)\\
        = & \left(\frac{2d}{\min\{\beta^2, \eta \alpha^2/4d\}} + 1\right) \ln\left(4|\cL|^2K^3\left(\left\lceil\sqrt{\frac{d}{2\min\{\beta^2, \eta \alpha^2/4d\}}}\right\rceil+1\right)^d\right)
    \end{align*}
\end{proof}

\end{document}